
\documentclass{article}



\usepackage{hyperref}

\usepackage{amsmath}
\usepackage{amsthm}
\usepackage{amssymb}
\usepackage{bbm}
\usepackage{mathtools}
\usepackage{mathrsfs}
\usepackage{dsfont}
\mathtoolsset{showonlyrefs}

\newtheorem{thm}{Theorem}[]

\newtheorem{prop}{Property}[]
\newtheorem{ass}{Assumption}[]


\usepackage{courier} 

\usepackage{lipsum} 

\usepackage[space]{grffile} 

\usepackage[round]{natbib}

\usepackage{graphicx}
\usepackage{wrapfig}
\usepackage{xcolor}
\usepackage{colortbl}
\definecolor{dark-red}{rgb}{0.4,0.15,0.15}
\definecolor{dark-blue}{rgb}{0,0,0.7}
\hypersetup{
    colorlinks, linkcolor={dark-blue},
    citecolor={dark-blue}, urlcolor={dark-blue}
}


\newcommand{\topic}[1]{\textcolor{violet}{}}

\DeclareMathOperator*{\argmin}{arg\,min}

\usepackage{algpseudocode}
\usepackage[vlined,linesnumbered,ruled,algo2e]{algorithm2e}
\SetKwProg{Fn}{Function}{}{}
\let\oldnl\nl
\newcommand{\nonl}{\renewcommand{\nl}{\let\nl\oldnl}}

\usepackage{multicol}
\usepackage{enumitem}
\usepackage[utf8]{inputenc} 
\usepackage[T1]{fontenc}    
\usepackage{url}            
\usepackage{booktabs}       
\usepackage{nicefrac}       

\usepackage{microtype}  
\usepackage{booktabs} 


\usepackage{theoremref}


\usepackage[accepted,nohyperref]{icml2020}

\icmltitlerunning{Optimizing for the Future in  Non-Stationary MDPs}

\begin{document}

\twocolumn[
\icmltitle{Optimizing for the Future in Non-Stationary MDPs}



\icmlsetsymbol{equal}{*}

\begin{icmlauthorlist}
\icmlauthor{Yash Chandak}{umass}
\icmlauthor{Georgios Theocharous}{adobe}
\icmlauthor{Shiv Shankar}{umass}\\
\icmlauthor{Martha White}{uoa}
\icmlauthor{Sridhar Mahadevan}{umass,adobe}
\icmlauthor{Philip S. Thomas}{umass}
\end{icmlauthorlist}

\icmlaffiliation{umass}{University of Massachusetts, MA, USA.}
\icmlaffiliation{uoa}{University of Alberta, AB, Canada}
\icmlaffiliation{adobe}{Adobe Research, CA, USA.}

\icmlcorrespondingauthor{Yash Chandak}{ychandak@cs.umass.edu}


\icmlkeywords{Machine Learning, ICML}

\vskip 0.3in
]




\printAffiliationsAndNotice{}  

\begin{abstract}

Most reinforcement learning methods are based upon the key assumption that the transition dynamics and reward functions are fixed, that is, the underlying Markov decision process is stationary. 
However, in many real-world applications, this assumption is violated and using existing algorithms may result in a performance lag.
To proactively search for a good \textit{future} policy, we present a policy gradient algorithm that maximizes a forecast of \textit{future} performance.
This forecast is obtained by fitting a curve to the counter-factual estimates of policy performance over time, without explicitly modeling the underlying non-stationarity.
The resulting algorithm amounts to a non-uniform reweighting of past data, and we observe that  \textit{minimizing} performance over some of the data from past episodes can be beneficial when searching for a policy that \textit{maximizes} future performance. 
We show that our algorithm,  called Prognosticator, is more robust to non-stationarity than two online adaptation techniques, on three simulated problems motivated by real-world applications.
\end{abstract}
\section{Introduction}
Policy optimization algorithms in RL are promising for obtaining general purpose control algorithms.
They are designed for Markov decision processes (MDPs), which model a large class of problems \cite{sutton2018reinforcement}.
This generality can facilitate application to a variety of real-world problems. 
However, most existing algorithms assume that the transition dynamics and reward functions are fixed, that is, the underlying Markov decision process is stationary.

This assumption is often violated in practical problems of interest.
For example, consider an assistive driving system. 
Over time, tires suffer from wear and tear, leading to increased friction.
Similarly, in almost all human-computer interaction applications, e.g., automated medical care, dialogue systems, and marketing, human behavior changes over time.
In such scenarios, if the automated system is not adapted to take such changes into account, or if it is adapted only after observing such changes, then the system might quickly become sub-optimal, incurring severe loss \citep{moore2014reinforcement}.
This raises our main question: \textit{how do we build systems that proactively search for a policy that will be good for the future MDP?}

%
In this paper we present a policy gradient based approach to search for a policy that maximizes the forecasted future performance when the environment is non-stationary. 
To capture the impact of changes in the environment on a policy's performance, first, the performance of the policy during the past episodes is estimated using counter-factual reasoning.
%
%
Subsequently, a regression curve is fit to these estimates to model the performance trend of the policy over time, thereby enabling the forecast of future performance.
%
%
By differentiating this performance forecast with respect to the parameters of the policy being evaluated, we obtain a gradient-based optimization procedure that proactively searches for a policy that will perform well in the future.\footnote{Code for our algorithm can be obtained using the following link: \href{https://github.com/yashchandak/OptFuture_NSMDP}{https://github.com/yashchandak/OptFuture\_NSMDP}.}
%
%
 %
 
 Recently, \citet{al2017continuous} and \citet{finn2019online} also presented methods that search for  initial parameters that are effective when the objective is changing over time. 
These approaches are complementary to our own, as they could be additionally applied to set the initial parameters of our algorithms. 
In our empirical study, we show how the adaptation procedure of their methods alone can result in a performance lag that is mitigated by our method, which explicitly captures the trend of the objective due to non-stationarity.
%
%
A detailed survey on other approaches can be found in the work by \citet{padakandla2020survey}.

\textbf{Advantages:}  The proposed method has the following advantages: (a)
%
    %
    It does not require modeling the transition function, reward function, or how either changes in a non-stationary environment, and thus scales gracefully with respect to the number of states and actions in the environment.
    %
    %
    (b) Irrespective of the complexity of the environment or the policy parameterization, it concisely models the \textit{effect} of changes in the environment on a policy's performance using a \textit{univariate} time-series. %
    (c) It is data-efficient in that it leverages all available data.
    %
    (d) It mitigates performance lag by proactively optimizing performance for episodes in both the immediate and near future.
    %
    (e) It degenerates to an estimator of the ordinary policy gradient if the system is stationary, meaning that there is little reason not to use our approach if there is a possibility that the system \emph{might} be non-stationary.
%

\textbf{Limitations:}
The method that we propose is limited to settings where  (a) non-stationarity is governed by an exogenous process (i.e., past actions do not impact the underlying non-stationarity), which has no auto-correlated noise, and (b) performance of every policy changes smoothly over time and has no abrupt breaks/jumps.
Further, we found that our method is sensitive to a hyper-parameter that trades off exploration and exploitation in the non-stationary setting.
We conclude the paper with a discussion of these limitations.

\section{Notation}
\topic{Setting up all the notations.}
An MDP $\mathcal M$ is a tuple $(\mathcal S, \mathcal A, \mathcal P, \mathcal R, \gamma, d^0)$, where
$\mathcal S$ is the set of possible states, $\mathcal A$ is the set of actions, $\mathcal P$ is the transition function, $\mathcal R$ is the reward function, $\gamma$ is the discount factor, and $d^0$ is the start state distribution.
Let $\mathcal R(s,a)$ denote the expected reward of taking action $a$ in state $s$.
For any given set $\mathcal X$, we use $\Delta(\mathcal X)$ to denote the set of distributions over $\mathcal X$.
A policy $\pi: \mathcal S \rightarrow \Delta(\mathcal A)$ is a distribution over the actions conditioned on the state.
When $\pi$ is parameterized using $\theta \in \mathbb{R}^d$, we denote it as $\pi^\theta$.
Often we write $\theta$ in place of $\pi^\theta$ when the dependence on $\theta$ is important.
In a non-stationary setting, as the MDP changes over time, we use $\mathcal M_k$ to denote the MDP during episode $k$.
In general, we will use sub-script $k$ to denote the episode number and a super-script $t$ to denote the time-step within an episode.
$S^t_k, A^t_k, $ and $R^t_k$ are the random variables corresponding to the state, action, and reward at time step $t$, in episode $k$. 
Let $H_k$ denote a trajectory in episode $k$: $(s^0_k, a^0_k, r^0_k, s^1_k, a^1_k, ..., s^T_k)$, where $T$ is the finite horizon.
The value function evaluated at state $s$, during episode $k$, under a policy $\pi$ is $v_k^{\pi}(s) = \mathbb{E}[\sum_{j=0}^{T-t} \gamma^j R_k^{t+j}|S_k^t=s, \pi]$, where conditioning on $\pi$ denotes that the trajectory in episode $k$ was sampled using $\pi$.
The start state objective for a policy $\pi$, in episode $k$, is defined as $J_k(\pi) \coloneqq \sum_s d_0(s)v_k^\pi(s)$.
Let $J^*_k = \text{max}_\pi\,\, J_k(\pi)$ be the maximum performance for $\mathcal M_k$. 
\section{Problem Statement}
\label{problemSetup}
%
%
To model non-stationarity, 
we let an exogenous process change the MDP from $\mathcal M_{k}$ to $\mathcal M_{k+1}$, i.e., between episodes.
Let $\{\mathcal M_k\}_{k=1}^\infty$ represent a sequence of MDPs, where each MDP $\mathcal M_k$ is denoted by the tuple  $(\mathcal S, \mathcal A, \mathcal P_k, \mathcal R_k, \gamma, d^0)$.\footnote{An alternate way to formulate this problem could be to convert a non-stationary MDP into a stationary (PO)MDP by considering an `expanded' (PO)MDP consisting of all possible variants of the given problem.
Then a single, never-ending, episode in that (PO)MDP can be considered with state dependent discounting \cite{white2017unifying} for task specifications.
We plan to explore the single, never-ending, episode formulation in future work.
} 

In many problems, like adapting to friction in robotics, human-machine interaction, etc., the transition dynamics and rewards function change, but every other aspect of the MDP remains the same throughout.
Therefore, we assume that 
%
%
for any two MDPs, $\mathcal M_k$ and $\mathcal M_{k+1}$, the state set $\mathcal S$, the action set $\mathcal A$, the starting distribution $d^0$, and the discount factor $\gamma$ are the same.
%
%

If the exogenous process changing the MDPs is arbitrary and changes it in unreasonable ways, then there is little hope of finding a good policy for the future MDP as $\mathcal M_{k+1}$ can be wildly different from everything the agent has observed by interacting with the past MDPs, $\mathcal M_{1},..., \mathcal M_{k}$. 
However, in many practical problems of interest, such changes are smooth and have an underlying (unknown) structure.
To make the problem tractable, we therefore assume that both the transition dynamics $(\mathcal P_1,\mathcal P_2, ...)$, and the reward functions $( \mathcal R_1, \mathcal R_2,... )$ vary smoothly over time in a way that ensures there are no abrupt jumps in the performance of any policy.  
%
%

\textbf{Problem Statement. } 
We seek to find a sequence of policies that minimizes lifelong regret: 
$$\argmin_{\{\pi_1, \pi_2, ... \pi_k, ...\}} \, \sum_{k=1}^\infty J_k^* - \sum_{k=1}^\infty J_k(\pi_k).
$$

\section{Related Work}
%

%
The problem of non-stationarity has a long history and no effort is enough to thoroughly review it. 
Here, we briefly touch upon the most relevant work and defer a more detailed literature review to the appendix. 
A more exhaustive survey can be found in the work by \citet{padakandla2020survey}.

Perhaps the work most closely related to ours is that of \citet{al2017continuous}. 
They consider a setting where an agent is required to solve test tasks that have different transition dynamics than the training tasks.
Using meta-learning, they aim to use training tasks to find an initialization vector for the policy parameters that can be quickly fine-tuned when facing tasks in the test set.
In many real-world problems, however, access to such independent training tasks may not be available \textit{a priori}.
In this work, we are interested in the continually changing setting where there is no boundary between training and testing tasks.
As such, we show how their proposed online adaptation technique that fine-tunes parameters, by discarding past data and only using samples observed online, can create performance lag and can therefore be data-inefficient. 
In settings where training and testing tasks do exist, our method can be leveraged to better adapt during test time, starting from any desired parameter vector.

%
Recent work by \citet{finn2019online} aims at bridging both the continuously changing setting and the train-test setting for supervised-learning problems.
They propose continuously improving an underlying parameter initialization vector and running a Follow-The-Leader (FTL) algorithm \citep{shalev2012online} 
every time new data is observed.
A naive adaption of this for RL would require access to all the underlying MDPs in the past for continuously updating the initialization vector, which would be impractical.
Doing this efficiently remains an open question and our method is complementary to choosing the initialization vector. 
Additionally, FTL based adaptation  always lags in tracking optimal performance as it uniformly maximizes performance over all the past samples that might not be directly related to the future.
Further, we show that by explicitly capturing the trend in the non-stationarity, we can mitigate this performance lag resulting from the use of an FTL algorithm during the adaptation process.

The problem of adapting to non-stationarity is also related to continual learning \citep{ring1994continual}, lifelong-learning \citep{thrun1998lifelong}, and meta-learning \citep{schmidhuber1999general}.
Several meta-learning based approaches for fine-tuning a (mixture of) trained model(s) using 
samples observed during a similar task at test time have been proposed \citep{nagabandi2018learning,nagabandi2018deep}.
%
%
Other works have also shown how models of the environment can be used for continual learning \citep{lu2019adaptive} or using it along with a model predictive control \citep{wagener2019online}.
Concurrent work by \citet{xie2020deep} also demonstrates how modeling the changes in a \textit{dynamic-parameter} MDP can be useful.
We focus on the model-free paradigm and our approach is complementary to these model-based methods.

More importantly, in many real-world applications, it can be infeasible to update the system frequently if it involves high computational or monetary expense.
In such a case, even optimizing for the immediate future might be greedy and sub-optimal.
The system should optimize for a longer term in the future, to compensate for the time until the next update is performed.
None of the prior approaches can efficiently tackle this problem.

\section{Optimizing for the Future}

%
The problem of minimizing lifelong regret is straightforward if the agent has access to sufficient samples, in advance, from the future environment, $\mathcal M_{k+1}$, that it is going to face (where $k$ denotes the current episode number).
That is, if we could estimate the start-state objective, $J_{k+1}(\pi)$, for the future MDP $\mathcal M_{k+1}$, then we could search for a policy $\pi$ whose performance is close to $J_{K+1}^*$. 
However, obtaining even a single sample from the future is impossible, let alone getting a sufficient number of samples.
This necessitates rethinking the optimization paradigm for searching for a policy 
that performs well when faced with the future unknown MDP.
There are two immediate challenges here:
\begin{enumerate}
    \item \textit{
    How can we estimate $J_{k+1}(\pi)$ without any samples from $\mathcal M_{k+1}$?}
    \item \textit{How can gradients, $\partial J_{k+1}(\pi)/ \partial \theta$, of this future performance be estimated?}
\end{enumerate}
%
In this section we address both of these issues using the following idea.
When the transition dynamics $(\mathcal P_1,\mathcal P_2, ...)$, and the reward functions $( \mathcal R_1, \mathcal R_2,... )$ are changing smoothly the performances $(J_1(\pi),J_2(\pi), ...)$ of any policy $\pi$ can also be expected to vary smoothly over time.
The impact of smooth changes in the environment thus manifests as smooth changes in the performance of any policy, $\pi$.
In cases where there is an underlying, unknown, structure in the changes of the environment, one can now ask: \textit{if the performances $J_{1:k}(\pi) \coloneqq (J_1(\pi), ..., J_{k}(\pi))$ of $\pi$ over the course of past episodes were known, can we analyze the trend in its past performances to find a policy that maximizes future performance $J_{k+1}(\pi)$?}

\subsection{Forecasting Future Performance}
 %
 In this section we address the first challenge of estimating future performance $J_{k+1}(\pi)$ and pose it as a time series forecasting problem.

Broadly, this requires two components: (a) A procedure to compute past performances, $J_{1:k}(\pi)$, of $\pi$. (b) A procedure to create an estimate, $\hat J_{k+1}(\pi)$, of $\pi$'s future performance, $J_{k+1}(\pi)$, using these estimated values from component (a).
 An illustration of this idea is provided in Figure \ref{fig:eval}.

\begin{figure}
    \centering
    \includegraphics[width=0.35\textwidth]{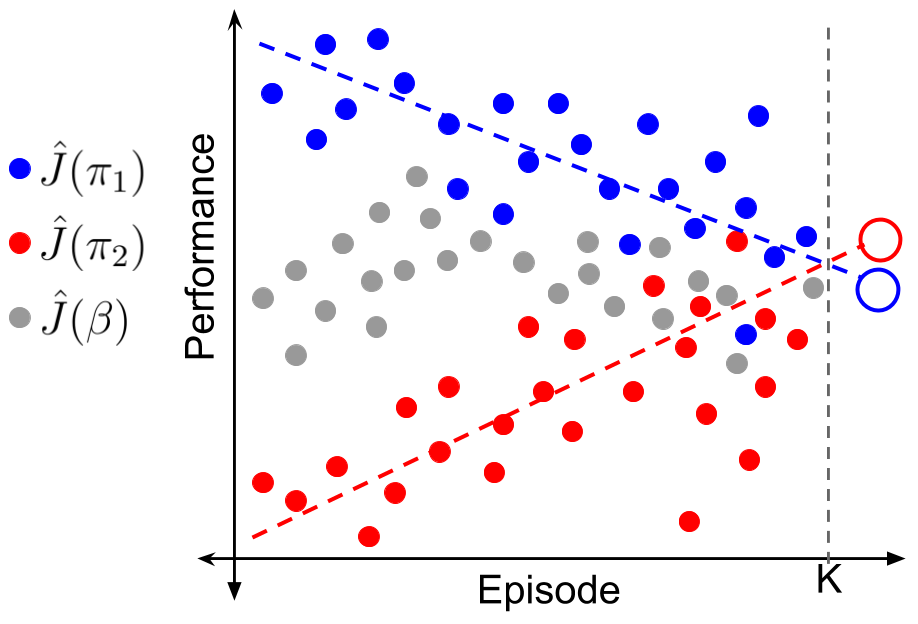}
    \caption{
    An illustration, where the blue and red filled circles represent counter-factually reasoned performance estimates of policies $\pi_1$ and $\pi_2$, respectively, using data collected from a given policy $\beta$. 
    The open circles represent the forecasted performance of $\pi_1$ and $\pi_2$ estimated by fitting a curve on the counter-factual estimates represented by filled circles. }
    \vspace{-5pt}
    \label{fig:eval}
\end{figure}

\textbf{Component (a).}  
As we do not have access to the past MDPs for computing the true values of past performances, $J_{1:k}(\pi)$, we propose computing estimates, $\hat J_{1:k}(\pi)$, of them from the observed data.
That is, in a non-stationary MDP, starting with the fixed transition matrix $\mathcal P_1$ and the reward function $\mathcal R_1$, we want to estimate the performance $J_{i}(\pi)$ of a given policy in episode $i \leq k$.  
Leveraging the fact that the changes to the underlying MDP are due to an exogenous processes, we can estimate $J_i(\pi)$ as,
\begin{align} J_i(\pi) =   \sum_{t=0}^{T}\gamma^t\mathbb{E}\left[ R_i^t  \middle | \pi, \mathcal P_i, \mathcal R_i\right], \label{eqn:fromcurrent}
\end{align}
where $\mathcal P_i$ and $\mathcal R_i$ are also random variables.
Next we describe how an estimate of $J_i(\pi)$ can be obtained from \eqref{eqn:fromcurrent} using information only from the $i^\text{th}$ episode.

%
To get an unbiased estimate, $\hat J_{i}(\pi)$, of $\pi$'s performance during episode $i$, consider the past trajectory $H_i$ of the $i^\text{th}$ episode that was observed when executing a policy $\beta_i$.
By using counter-factual reasoning \citep{rosenbaum1983central} 
and leveraging the per-decision importance sampling (PDIS) estimator \citep{precup2000eligibility}, an unbiased estimate of $J_i(\pi)$ is thus given by:\footnote{We assume that $\forall i \in \mathbb{N}$ the distribution of $H_i$ has full support over the set of all possible trajectories of the MDP $\mathcal M_i$.}
\begin{align}
     \hat J_i(\pi) &:=  \sum_{t=0}^{H} \left ( \prod_{l=0}^t 
    \frac{\pi(A_i^l|S_i^l)}{\beta_i(A_i^l|S_i^l)} \right) \gamma^t R_i^t. \label{eqn:ope}
\end{align}

It is worth noting that computing \eqref{eqn:ope} does not require storing all the past policies $\beta_i$, one needs to only store the actions and the probabilities with which these actions were chosen. 

%
\textbf{Component (b).} 
To obtain the second component, which captures the structure in $\hat J_{1:k}(\pi) \coloneqq (\hat J_1(\pi), ..., \hat J_{k}(\pi)) $ and predicts future performances, we make use of a forecasting function $\Psi$ that estimates future performance $\hat J_{k+1}(\pi)$ conditioned on the past performances:
\begin{align}
    \hat J_{k+1}(\theta) \coloneqq \Psi (\hat J_1(\pi), \hat J_2(\pi), ...., \hat J_k(\pi)). \label{eqn:forecaster}
\end{align}
%

%
While $\Psi$ can be any forecasting function, we consider $\Psi$ to be an ordinary least squares (OLS) regression model with parameters $w \in \mathbb{R}^{d\times 1}$, and the following input and output variables, 
\begin{align}
    X &\coloneqq [1, 2, ..., k]^\top \quad\quad  & \in \mathbb{R}^{k \times 1}, \\
    Y &\coloneqq [\hat J_1(\pi), \hat J_2(\pi), \hat J_2(\pi), ..., \hat J_k(\pi)]^\top & \in \mathbb{R}^{k \times 1}. 
\end{align}
For any $x \in X$, let $\phi(x) \in \mathbb{R}^{1 \times d}$ denote a $d$-dimensional basis function for encoding the time index.
For example, an identity basis $\phi(x) \coloneqq \{x, 1 \}$, or a Fourier basis, where
$$\phi(x) \coloneqq \{\sin(2 \pi n x | n \in \mathbb{N}\} \cup \{\cos(2 \pi n x) | n \in \mathbb{N} \} \cup \{1\}. $$
Let $\Phi \in \mathbb{R}^{k\times d}$ be the corresponding basis matrix.
The solution to above least squares problem is $ w = (\Phi^\top\Phi)^{-1}\Phi^\top Y$ \citep{bishop2006pattern} and the forecast of the future performance can be obtained using,
\begin{align}
   \hat J_{k+1}(\pi) =  \phi(k+1) w  = \phi(k+1)(\Phi^\top  \Phi)^{-1}\Phi^\top Y. \label{eqn:predict}
\end{align}

\topic{Advantages of the proposed approach for estimating future performance.}
This procedure enjoys an important advantage -- by just using a univariate time-series to estimate future performance, it bypasses the need for modeling the environment, which can be prohibitively hard or even impossible.
Further, note that $\Phi ^\top \Phi \in \mathbb{R}^{d \times d}$, where $d << k$ typically, and thus the cost of computing the matrix inverse is negligible.
These advantages allows this procedure to gracefully scale to more challenging problems, while being robust to the size, $|\mathcal S|$, of the state set or the action set $|\mathcal A|$.

\subsection{Differentiating Forecasted Future Performance}

%
In the previous section, we addressed the first challenge and showed how to proactively estimate future performance, $\hat J_{k+1}(\theta)$, of a policy $\pi^\theta$ by explicitly modeling the trend in its past performances $\hat J_{1:k}(\theta)$.
In this section, we address the second challenge to facilitate a complete optimization procedure.
 A pictorial illustration of the idea is provided in Figure \ref{fig:idea}.

Gradients for $\hat J_{k+1}(\theta)$ with respect to $\theta$ can be obtained as follows,
\begin{align}
    \frac{ d \hat J_{k+1}(\theta)}{d \theta} &= \frac{d \Psi(\hat J_1(\theta), ..., \hat J_k(\theta))}{d \theta} \\
    &= \sum_{i=1}^k \underbrace{ \frac{\partial \Psi(\hat J_1(\theta), ..., \hat J_k(\theta))}{\partial \hat J_i(\theta)}}_{\text{(a)}}\cdot \underbrace{ \frac{d \hat J_i(\theta)}{d \theta}}_{(b)}. \,\,\, \label{eqn:future-grad}
\end{align}
%

The decomposition in \eqref{eqn:future-grad} has an elegant intuitive interpretation.
The terms assigned to $(a)$ in \eqref{eqn:future-grad} correspond to how the future prediction would change as a function of past outcomes, and
the terms in $(b)$ indicate how the past outcomes would change due to changes in the parameters of the policy $\pi^\theta$. 
In the next paragraphs, we discuss how to obtain the terms $(a)$ and $(b)$.
%
%

To obtain term (a), note that in \eqref{eqn:predict}, $\hat J_i(\theta)$ corresponds to the $i^\text{th}$ element of $Y$, and so using \eqref{eqn:forecaster} the gradients of the terms (a) in \eqref{eqn:future-grad} are,
\begin{align}
    \frac{\partial \hat J_{k+1}(\theta)}{\partial \hat J_i(\theta)}  &= \frac{\partial \phi(k+1)(\Phi^\top  \Phi)^{-1}\Phi^\top  Y}{\partial Y_i} \\
    &= [ \phi(k+1)(\Phi^\top  \Phi)^{-1}\Phi^\top  ]_i, \label{eqn:extra-grad}
\end{align}
where $[Z]_i$ represents the $i^\text{th}$ element of a vector $Z$.
Therefore, \eqref{eqn:extra-grad} is the \textit{gradient of predicted future performance with respect to an estimated past performance}.
%

The term $(b)$ in \eqref{eqn:future-grad} corresponds to the gradient of the PDIS estimate $\hat J_i(\theta)$ of the past performance with respect to policy parameters $\theta$.
The following Property provides a form for (b) that makes its computation straightforward. 
\begin{prop}[PDIS gradient]
Let $\rho_i(0,l) \coloneqq \prod_{j=0}^{l} \frac{\pi^\theta(A_i^j|S_i^j)}{\beta_i(A_i^j|S_i^j)}$.
\begin{align}
    \frac{d \hat J_i(\theta)}{d\theta} &= \sum_{t=0}^{T}  \frac{  \partial \log  \pi^\theta(A_i^t|S_i^t)}{\partial \theta}  \left( \sum_{l=t}^{T} \rho_i(0, l) \gamma ^{l}  R_i^l \right). \label{eqn:opg3}
\end{align}
\end{prop}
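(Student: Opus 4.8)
The plan is to obtain \eqref{eqn:opg3} by differentiating the PDIS estimator in \eqref{eqn:ope} directly, then applying the likelihood-ratio identity to each product of importance ratios, and finally reindexing a double sum. First I would rewrite \eqref{eqn:ope} compactly in the $\rho$-notation of the statement: over the finite horizon $T$, $\hat J_i(\theta) = \sum_{t=0}^{T} \rho_i(0,t)\,\gamma^t R_i^t$, where $\rho_i(0,t) = \prod_{l=0}^{t} \frac{\pi^\theta(A_i^l|S_i^l)}{\beta_i(A_i^l|S_i^l)}$. Because this is a finite sum and the realized rewards $R_i^t$ and the behavior probabilities $\beta_i(A_i^l|S_i^l)$ do not depend on $\theta$, differentiation passes term by term through the sum, leaving $\frac{d\hat J_i(\theta)}{d\theta} = \sum_{t=0}^{T} \gamma^t R_i^t\, \frac{d\rho_i(0,t)}{d\theta}$.

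Next I would compute $\frac{d\rho_i(0,t)}{d\theta}$ with the standard log-derivative (``REINFORCE'') trick: $\frac{d\rho_i(0,t)}{d\theta} = \rho_i(0,t)\,\frac{d\log\rho_i(0,t)}{d\theta} = \rho_i(0,t)\sum_{j=0}^{t}\frac{\partial \log \pi^\theta(A_i^j|S_i^j)}{\partial\theta}$, where the contributions of the denominators $\beta_i(A_i^j|S_i^j)$ vanish since they are $\theta$-independent. Substituting this back yields a double sum over the triangular index set $\{(t,j) : 0 \le j \le t \le T\}$, namely $\frac{d\hat J_i(\theta)}{d\theta} = \sum_{t=0}^{T}\sum_{j=0}^{t}\frac{\partial\log\pi^\theta(A_i^j|S_i^j)}{\partial\theta}\,\rho_i(0,t)\,\gamma^t R_i^t$.

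Finally I would swap the order of summation over that triangular region: fixing the inner index $j$ and collecting all terms with $t \ge j$, then renaming $j\mapsto t$ and $t\mapsto l$, gives $\sum_{t=0}^{T}\frac{\partial\log\pi^\theta(A_i^t|S_i^t)}{\partial\theta}\left(\sum_{l=t}^{T}\rho_i(0,l)\,\gamma^l R_i^l\right)$, which is exactly \eqref{eqn:opg3}. I do not expect any genuine analytic obstacle: since every sum here is finite, interchanging differentiation with summation and reordering the double sum are both immediate. The only thing to be careful about is the bookkeeping of the index swap that produces the ``reward-to-go''-style inner sum over $l \ge t$; it is also worth remarking that this derivation is precisely the off-policy, importance-weighted analogue of the usual policy gradient theorem, to which it reduces when every $\rho_i(0,l) \equiv 1$.
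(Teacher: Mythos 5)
Your proposal is correct and follows essentially the same route as the paper's proof: differentiate the finite sum term by term, apply the log-derivative trick to each cumulative importance ratio (noting the $\beta_i$ terms are $\theta$-independent), and then swap the order of summation over the triangular index set to produce the reward-to-go form — the paper presents this last step as exchanging a row-wise sum for a column-wise sum in a table of terms. No gaps; the bookkeeping you flag is exactly the only nontrivial step.
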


\begin{proof}
    See Appendix \ref{apx:sec:gradients}. 
\end{proof}

\begin{figure}
    \centering
    \includegraphics[width=0.45\textwidth]{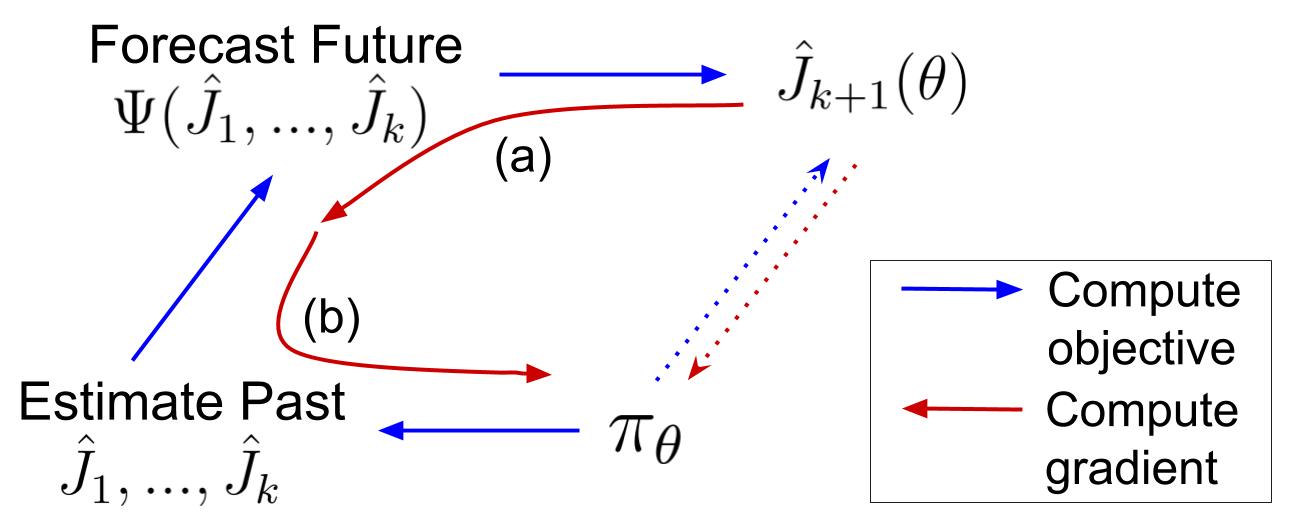}
    \caption{
    The proposed method from the lens of differentiable programming.
    At any time $k$, we aim to optimize policy's parameters, $\theta$, to maximize its performance in the future, $J_{k+1}(\theta)$.
    However, conventional methods (dotted arrows) can not be used to directly optimize for this.
    In this work, we achieve this as a composition of two programs: one which connects the policy's parameters to its past performances, and the other which forecasts future performance as a function of these past performances.
    The optimization procedure then corresponds to taking derivatives through this composition of programs to update policy parameters in a direction that maximizes future performance.
    %
    %
    %
    Arrows (a) and (b) correspond to the respective terms marked in \eqref{eqn:future-grad}.}
    \label{fig:idea}
\end{figure}

%

\subsection{Algorithm}

We provide a sketch of our proposed \textit{Prognosticator} procedure for optimizing the future performance of the policy in Algorithm \ref{apx:Alg:1}.
To make the method more practical, we incorporated two additional modifications to reduce computational cost and variance.
First, it is often desirable to perform an update only after a certain episode interval $\delta$ to reduce computational cost. 
This raises the question: if a newly found policy will be executed for the next $\delta$ episodes, should we choose this new policy to maximize performance on just the single next episode, or to maximize the average performance over the next $\delta$ episodes? 
An advantage of our proposed method is that we can easily tune how far in the future we want to optimize for. Thus,
to minimize lifelong regret, we propose optimizing for the mean performance over the next $\delta$ episodes.
That is, $\text{arg\,max}_\theta \, (1/\delta)\sum_{\Delta=1}^\delta \hat J_{k+\Delta}(\theta)$.
    %

Second, notice that if the policy becomes too close to deterministic, there would be two undesired consequences. 
(a) The policy will not cause exploration, precluding the agent from observing any changes to the environment in states that it no longer revisits---changes that might make entering those states worthwhile to the agent. 
(b) In the future when estimating $\hat J_{k+1}(\theta)$ using the \emph{past} performance of $\theta$, importance sampling will have high variance if the policy executed during episode $k+1$ is close to deterministic. 
To mitigate these issues, we add an entropy regularizer $\mathcal H$ during policy optimization.
More details are available in Appendix \ref{apx:sec:implementation}.

	\IncMargin{1em}
	\begin{algorithm2e}[t]
		\textbf{Input} Learning-rate $\eta$, time-duration $\delta$, entropy-regularizer  $\lambda$\\
		\textbf{Initialize} Forecasting function $\Psi$, Buffer  $\mathds{B}$\\
		\While{True}{
			\nonl \textcolor[rgb]{0.5,0.5,0.5}{\# Record a new batch of trajectories using $\pi^\theta$}
			\\
			\For {$episode = 1,2,..., \delta$}{
    		$h = \{(s_{0:T}, a_{0:T}, \Pr(a_{0:T}|s_{0:T}), r_{0:T}) \}$  
    		\\
    		$\mathds{B}.\text{insert}(h)$
    		}
    		
    		\vspace{8pt}
    		\nonl \textcolor[rgb]{0.5,0.5,0.5}{\# Update for future performance}
    		\\
    		\For{$i = 1, 2, ...$}{
    			\nonl \textcolor[rgb]{0.5,0.5,0.5}{\# Evaluate past performances}\\
    		    \For {$k =  1, 2, ..., |\mathds{B}|$ }{
    		        $\hat J_k(\theta) =  \sum_{t=0}^{T} \rho (0, t) \gamma ^{t}  R^t_k $ \Comment{\eqref{eqn:ope}}
    		    }
    		    \vspace{5pt}
    		    %
    		    \nonl \textcolor[rgb]{0.5,0.5,0.5}{\# Future forecast and its gradient}\\
    		    $\mathcal L(\theta) = \frac{1}{\delta}\sum_{\Delta=1}^\delta \hat J_{k+\Delta}(\theta)$ \Comment{\eqref{eqn:predict}}
    		    \\
    		    $ \theta \leftarrow \theta + \eta \frac{\partial}{\partial \theta} (\mathcal L(\theta) + \lambda \mathcal H(\theta))$  \Comment{\eqref{eqn:future-grad}}
    		}
		}  
		\caption{Prognosticator}
		\label{apx:Alg:1}  
	\end{algorithm2e}
	\DecMargin{1em} 

\section{Understanding the Behavior of Prognosticator }

Notice that as the scalar term (a) is multiplied by the PDIS gradient term (b) in \eqref{eqn:future-grad}, the gradient of future performance can be viewed as a weighted sum of off-policy policy gradients.
In Figure \ref{fig:weights}, we provide visualization of the weights $\zeta_i \coloneqq \partial \hat J_{100}(\theta)/ \partial \hat J_i(\theta)$ for PDIS gradients of each episode $i$, when the performance for $100^\text{th}$ episode is forecasted using data from the past $99$ episodes.
For the specific setting when $\Psi$ is an OLS estimator, these  weights  are independent of $Y$ in \eqref{eqn:extra-grad} and their pattern remains constant for any given sequence of MDPs.
Importantly, note the occurrence of negative weights in Figure \ref{fig:weights} when the identity basis 
or Fourier basis is used, suggesting that the optimization procedure should move towards a policy that had \textit{lower} performance in some of the past episodes.
While this negative weighting seems unusual at first glance, it has an intriguing interpretation.
%

\begin{figure}
    \centering
    \includegraphics[width=0.4\textwidth]{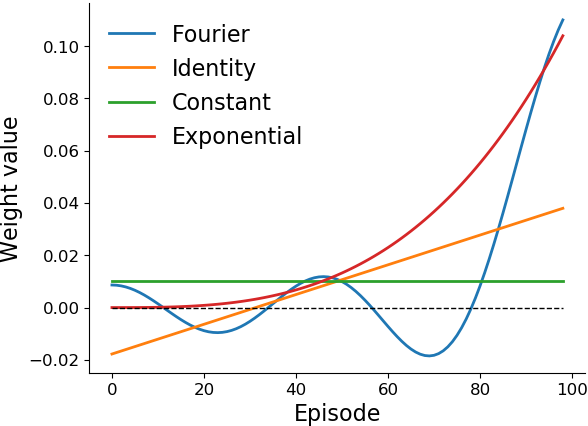}
    \caption{
    The value of weights $\zeta_i$ for all values of $i \in [1, 99]$ using different basis functions to encode the time index. 
    Notice that many weights are negative when using the identity or Fourier bases.}
    \label{fig:weights}
\end{figure}

To better understand these negative weights, 
consider a qualitative comparison when weights from different methods in Figure \ref{fig:weights} are used along with the performance estimates of policies $\pi_1$ and $\pi_2$ in Figure \ref{fig:eval}.
Despite having lower estimates of return everywhere, $\pi_2$'s rising trend suggests that it might have higher performance in the future, that is, $J_{k+1}(\pi_2) > J_{k+1}(\pi_1)$.
Existing online learning methods like FTL, maximize performance on all the past data uniformly (green curve in Figure \ref{fig:weights}). 
Similarly, the exponential weights (red curve in Figure \ref{fig:weights}) are representative of approaches that only optimize using data from recent episodes and discard previous data.
Either of these methods that use only non-negative weights can never capture the trend to forecast $J_{k+1}(\pi_2) > J_{k+1}(\pi_1)$. 
However, the weights obtained when using the identity basis would facilitate \textit{minimization} of performances in the distant past and maximization of performance in the recent past.
Intuitively, this means that it moves towards a policy whose performance is on a linear rise, as it expects that policy to have better performance in the future.
%

While weights from the identity basis are useful for forecasting whether $J_{k+1}(\pi_2) > J_{k+1}(\pi_1)$, it cannot be expected that the trend will always be linear as in Figure \ref{fig:eval}. 
To be more flexible and allow for any smooth trend, we opt to use the Fourier basis in our experiments. 
Observe the alternating sign of weights in Figure \ref{fig:weights} when using the Fourier basis. 
This suggests that the optimization procedure will take into account the \textit{sequential differences} in performances over the past, thereby favoring the policy that has shown the most performance increments in the past.
This also avoids restricting the performance trend of a policy to be linear.

\section{Mitigating Variance}
%
%
While model-free algorithms for finding a good policy are scalable to large problems, they tend to suffer from high-variance \citep{greensmith2004variance}.
In particular, the use of importance sampling estimators can increase the variance further \citep{guo2017using}.
In our setup, high variance in estimates of past performances $\hat J_{1:k}(\pi)$ of $\pi$ can hinder capturing $\pi$'s performance trend, thereby making the forecasts less reliable.
%

Notice that a major source of variance is the availability of only a \textit{single} trajectory sample per MDP $\mathcal M_i$, for all $i \in \mathbb{N}$.
If this trajectory $H_i$, generated using $\beta_i$ is likely when using $\beta_i$, but has near-zero probability when using $\pi$ then the estimated $\hat J_i(\pi)$ is also nearly zero.
While $\hat J_i(\pi)$ is an unbiased estimate of $ J_i(\pi)$, information provided by this single $H_i$ is of little use to evaluate $J_i(\pi)$. 
Subsequently, discarding this from time-series analysis, rather than setting it to be $0$, can make the time series forecast more robust against outliers.
In comparison, if trajectory $H_i$ is unlikely when using $\beta_i$ but likely when using $\pi$, then not only is $H_i$ very useful for estimating $J_i(\pi)$ but  it also has a lower chance of occurring in the future, so this trajectory must be emphasized when making a forecast.
Such a process of (de-)emphasizing estimates of past returns using the collected data itself can introduce bias, but this bias might be beneficial in this few-sample regime.

To capture this idea formally, we build upon the insights of \citet{hachiya2012importance} and \citet{mahmood2014weighted}, who draw an equivalence between weighted least-squares (WLS) estimation and the weighted importance sampling (WIS)  \citep{precup2000eligibility} estimator.
%
Particularly, let $G_i := \sum_{t=0}^T \gamma^t R^t_i$ be the discounted return of the $i^\text{th}$ trajectory observed from a stationary MDP, and $\rho^\ddagger_i \coloneqq \rho_i(0, T)$ be the importance ratio of the entire trajectory.
Then the WIS estimator, $\hat J^\ddagger (\pi)$, of the performance of $\pi$ in a stationary MDP is,
\begin{align}
  \hat J^\ddagger (\pi) &:= \underset{c \in \mathbb{R}}{\text{argmin}} \frac{1}{n}\sum_{i=1}^n \rho_i^\ddagger(G_i - c)^2 = \frac{\sum_{i=1}^n \rho_i^\ddagger G_i}{\sum_{i=1}^n \rho_i^\ddagger}. \label{eqn:WLS_rederived}
\end{align}
%

To mitigate variance in our setup, we propose extending WIS.
In the non-stationary setting, to perform WIS while capturing the trend in performance over time, we use a modified  forecasting function $\Psi^\ddagger$, which is a weighted least-squares regression model with a $d-$dimensional basis function $\phi$, and parameters $w^\ddagger \in \mathbb{R}^{d\times 1}$,
\begin{align}
  w^\ddagger &:= \underset{c \in \mathbb{R}^{d \times 1}}{\text{argmin}} \frac{1}{n}\sum_{i=1}^n \rho_i^\ddagger(G_i - c^\top \phi(i))^2. \label{eqn:weightedLS} 
\end{align}
Let  $\Lambda \in \mathbb{R}^{k \times k}$ be a diagonal weight matrix such that $\Lambda_{ii} = \rho_i^\ddagger$, let $\Phi \in \mathbb{R}^{k\times d}$ be the basis matrix, and let the following be input and output variables, 
\begin{align}
    X &\coloneqq [1, 2, ..., k]^\top \quad\quad  & \in \mathbb{R}^{k \times 1}, \\
    Y &\coloneqq  [G_1, G_2, ..., G_k]^\top & \in \mathbb{R}^{k \times 1}.
\end{align}
The solution to the weighted least squares problem in \eqref{eqn:weightedLS} is then given by $ w^\ddagger = (\Phi^\top \Lambda \Phi)^{-1}\Phi^\top \Lambda Y$ and the forecast of the future performance can be obtained using,
   $$\hat J^\ddagger _{k+1}(\pi) :=  \phi(k+1) w^\ddagger  = \phi(k+1)(\Phi^\top \Lambda \Phi)^{-1}\Phi^\top \Lambda Y. \label{eqn:predictweighted}$$
%

 $\hat J^\ddagger _{k+1}(\pi)$ has several desired properties.
It incorporates a notion of how relevant each observed trajectory is towards forecasting, while also capturing the trend in performance.
The forecasts are less sensitive to the importance sampling variances and the entire process is still differentiable.
%

\section{Generalizing to the Stationary Setting}
%
As the agent is unaware of how the environment is changing, a natural question to ask is: What if the agent wrongly assumed a stationary environment was non-stationary?
What is the quality of of the agent's performance forecasts?
What is the impact of the negative weights on past evaluations of a policy's performance?
%
Here we answer these questions. 
%

Before stating the formal results, we introduce some necessary notation and two additional properties.
%
%
Let $J(\pi)$ be the performance of policy $\pi$ for a stationary MDP.
Let $\hat J_{k+\delta}(\pi)$ and $\hat J^\ddagger _{k+\delta}(\pi)$ be the non-stationary importance sampling (NIS) and non-stationary weighted importance sampling (NWIS) estimators of performance $\delta$ episodes in future.
Further, let the basis function $\phi$ used for encoding the time index in both $\Psi$ and $\Psi^\ddagger$ be such that it satisfies the following conditions: 
(a) $\phi(\cdot)$ always contains $1$ to incorporate a bias/intercept coefficient in least-squares regression (for example, $\phi(\cdot) = [\phi_1(\cdot), ..., \phi_{d-1}(\cdot), 1]$, where $\phi_i(\cdot)$ are arbitrary functions).
(b) $\Phi$ has full column rank such that $(\Phi^\top \Phi)^{-1}$ exists.
Both these properties are trivially satisfied by most basis functions.
With this notation and these properties, we first formalize the stationarity assumption: 
\begin{ass}[Stationarity]
For all $i$, $\mathcal M_i = \mathcal M_{i+1}$. 
\thlabel{ass:stationary}
\end{ass}
This implies that $\mathbb{E}[ \hat J_i(\pi)] = J(\pi)$. 
Following prior literature \cite{precup2000eligibility,thomas2015safe,mahmood2014weighted} we also make a simplifying assumption that allows us to later apply a standard form of the laws of large numbers: %
\begin{ass}[Independence]
$\hat J_i(\pi)$ are independent for all $i$ in $\{1,\dotsc, k\}$.
\thlabel{ass:independentTraj}
\end{ass}
This assumption is satisfied if there is only one behavior policy (i.e., $\forall i, \beta_i=\beta_{i+1}$) or if the sequence of behavior policies does not depend on the data. 
This assumption is not satisfied when the sequence of behavior policies depends on the data because then episodes are not independent. 
While we expect that the following theorems apply even without Assumption \ref{ass:independentTraj}, we have not established this result formally.

We then have the following results indicating that NIS is unbiased and consistent like ordinary importance sampling and NWIS is biased and consistent like weighted importance sampling.
%
%
\begin{figure}
    \centering
    \includegraphics[width=0.4\textwidth]{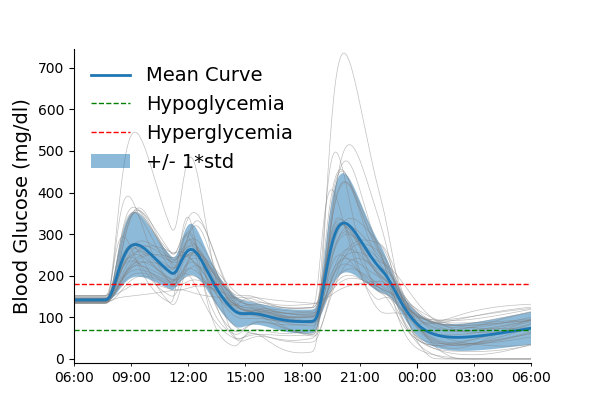}
    \caption{Blood-glucose level of an \textit{in-silico} patient for $24$ hours (one episode). 
     Humps in the graph occur at times when a meal is consumed by the patient. }
    \label{fig:my_label}
\end{figure}

\begin{thm}[Unbiased NIS] Under \thref{ass:stationary,ass:independentTraj}, for all $\delta \geq 1$, $\hat J_{k+\delta}(\pi)$ is an unbiased estimator of $J(\pi)$. That is, $\mathbb{E}[\hat J_{k+\delta}(\pi)] = J(\pi)$.
\thlabel{thm:unbiasedNIS}
\end{thm}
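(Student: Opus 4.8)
The plan is to exploit the fact that the NIS forecast is a \emph{linear} functional of the vector of PDIS estimates, with coefficients that are deterministic (they depend only on the fixed episode indices, not on the random returns). Writing $Y \coloneqq [\hat J_1(\pi), \dots, \hat J_k(\pi)]^\top$, the analogue of \eqref{eqn:predict} for $\delta$ episodes ahead is $\hat J_{k+\delta}(\pi) = \phi(k+\delta)(\Phi^\top \Phi)^{-1}\Phi^\top Y$. Since $X = [1,\dots,k]^\top$ encodes only time indices, $\Phi$ is non-random, so the row vector $h^\top \coloneqq \phi(k+\delta)(\Phi^\top\Phi)^{-1}\Phi^\top \in \mathbb{R}^{1\times k}$ is deterministic and, by linearity of expectation, $\mathbb{E}[\hat J_{k+\delta}(\pi)] = h^\top \mathbb{E}[Y]$. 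By \thref{ass:stationary} together with the unbiasedness of the PDIS estimator \eqref{eqn:ope} (using the full-support assumption on $H_i$ and the finite horizon $T$ so that the expectation is well-defined), $\mathbb{E}[\hat J_i(\pi)] = J(\pi)$ for every $i$, hence $\mathbb{E}[Y] = J(\pi)\,\mathbf{1}_k$ with $\mathbf{1}_k \in \mathbb{R}^{k}$ the all-ones vector. Assumption \thref{ass:independentTraj} is not actually needed here; it is used only for the consistency claim, but it is harmless to keep.

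The remaining step, and the only one with any content, is the algebraic identity $h^\top \mathbf{1}_k = 1$, which says that least-squares regression onto a basis containing an intercept reproduces a constant response exactly. By condition (a), some coordinate of $\phi(\cdot)$ is identically $1$; let $e \in \mathbb{R}^{d}$ be the standard basis vector selecting that coordinate, so the corresponding column of $\Phi$ is $\mathbf{1}_k$, i.e.\ $\Phi e = \mathbf{1}_k$. Using that $(\Phi^\top\Phi)^{-1}$ exists by condition (b),
\begin{align}
(\Phi^\top\Phi)^{-1}\Phi^\top \mathbf{1}_k = (\Phi^\top\Phi)^{-1}\Phi^\top \Phi\, e = e,
\end{align}
so $h^\top \mathbf{1}_k = \phi(k+\delta)\,e = [\phi(k+\delta)]_{\text{const}} = 1$.

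Putting the pieces together, $\mathbb{E}[\hat J_{k+\delta}(\pi)] = h^\top \mathbb{E}[Y] = J(\pi)\, h^\top\mathbf{1}_k = J(\pi)$ for every $\delta \ge 1$; in fact the argument uses nothing about $\delta$ beyond the fact that $\phi(k+\delta)$ still carries a $1$ in its intercept coordinate. I expect no serious obstacle: the whole proof is linearity of expectation plus one two-line matrix identity. The only points requiring care are (i) noting that the hat matrix $h^\top$ is data-independent, so expectation passes through cleanly and no subtle correlation between weights and returns arises, and (ii) observing that conditions (a)–(b) are precisely what make OLS exact on constants — which is the mechanism by which a (mistakenly) non-stationary forecaster degenerates to an unbiased estimator once the environment is in fact stationary.
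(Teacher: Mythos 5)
Your proposal is correct and is essentially the paper's own argument: your vector $J(\pi)\,e$ is exactly the paper's optimal parameter $w^* = [0,\dots,0,J(\pi)]^\top$, and the cancellation $(\Phi^\top\Phi)^{-1}\Phi^\top\Phi\,e = e$ is the same step the paper uses (its alternate proof establishes the equivalent fact that the hat-matrix weights sum to one). Your side remark that \thref{ass:independentTraj} is not actually needed for unbiasedness is also consistent with the paper's proof, which only invokes stationarity and linearity of expectation.
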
 

\begin{thm}[Biased NWIS]
Under \thref{ass:stationary,ass:independentTraj},
for all $\delta \geq 1$, $\hat J^\ddagger _{k+\delta}(\pi)$ may be a \textit{biased} estimator of $J(\pi)$. That is, it is possible that $\mathbb{E}[\hat J^\ddagger _{k+\delta}(\pi)] \neq J(\pi)$.
\thlabel{thm:unbiasedNWIS}
\end{thm}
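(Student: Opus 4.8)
\thref{thm:unbiasedNWIS} is an existence claim, so it suffices to exhibit one stationary MDP, a behavior policy, an evaluation policy $\pi$, a horizon, and a basis $\phi$ obeying conditions (a)--(b) for which $\mathbb{E}[\hat J^\ddagger_{k+\delta}(\pi)] \neq J(\pi)$ while Assumptions~\thref{ass:stationary,ass:independentTraj} hold. The idea is to pick $\phi$ so that the NWIS forecaster $\Psi^\ddagger$ collapses \emph{exactly} to the ordinary weighted importance sampling (WIS) estimator of \eqref{eqn:WLS_rederived}, and then invoke the classical fact that WIS is biased. Concretely, take the constant basis $\phi(\cdot) \coloneqq \{1\}$, so that $d=1$, $\Phi = [1,1,\dots,1]^\top \in \mathbb{R}^{k\times 1}$, and conditions (a) ($\phi$ contains $1$) and (b) ($\Phi^\top\Phi = k$ is invertible) both hold. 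Then $\Phi^\top\Lambda\Phi = \sum_{i=1}^k \rho_i^\ddagger$ and $\Phi^\top\Lambda Y = \sum_{i=1}^k \rho_i^\ddagger G_i$, so for every $\delta \geq 1$,
\[
\hat J^\ddagger_{k+\delta}(\pi) \;=\; \phi(k+\delta)\,(\Phi^\top\Lambda\Phi)^{-1}\Phi^\top\Lambda Y \;=\; \frac{\sum_{i=1}^k \rho_i^\ddagger G_i}{\sum_{i=1}^k \rho_i^\ddagger},
\]
which is precisely the WIS estimator of \eqref{eqn:WLS_rederived} and, in particular, is independent of $\delta$.

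\textbf{Concrete biased instance.} Let $\mathcal M$ be a one-step (bandit) MDP with a single state, two actions $a_1, a_2$, and $\mathcal R(\cdot,a_1)=1$, $\mathcal R(\cdot,a_2)=0$; let every behavior policy be the uniform policy $\beta(a_1\mid\cdot)=\beta(a_2\mid\cdot)=\tfrac12$ (a single fixed behavior policy, so \thref{ass:independentTraj} holds), and let $\pi(a_1\mid\cdot)=p$ with $p \neq \tfrac12$, so that $J(\pi)=p$. Already for $k=1$ we get $\rho_1^\ddagger \in \{2p,\, 2(1-p)\}$, which is strictly positive for $p\in(0,1)$, so the estimator is well-defined, and $\hat J^\ddagger_{1+\delta}(\pi)=\rho_1^\ddagger G_1/\rho_1^\ddagger=G_1$; hence $\mathbb{E}[\hat J^\ddagger_{1+\delta}(\pi)]=\mathbb{E}[G_1]=\tfrac12\neq p=J(\pi)$. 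For general $k$ the same holds by averaging the ratio $pm/\!\left(pm+(1-p)(k-m)\right)$ over $m\sim\mathrm{Binomial}(k,\tfrac12)$ (for example $k=2$, $p=\tfrac14$ yields $\mathbb{E}[\hat J^\ddagger_{k+\delta}(\pi)]=\tfrac38\neq\tfrac14$), or simply by the standard bias of WIS. This proves \thref{thm:unbiasedNWIS}.

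\textbf{Main obstacle.} There is no deep difficulty once NWIS is reduced to WIS; the only points needing care are (i) that the weighted least-squares normal equations are well-posed, i.e. $\sum_i \rho_i^\ddagger > 0$ almost surely --- which holds because $\beta$ has full support, an assumption already in force for the importance-sampling estimators --- and (ii) verifying that \thref{ass:stationary}, \thref{ass:independentTraj}, and conditions (a)--(b) genuinely hold for the constructed instance, which they do by design. If one instead wanted a quantitative, ``generic'' statement (bias nonzero for typical MDPs and richer bases $\phi$), the real work would be a delta-method / second-order Taylor expansion of this ratio-type estimator, showing that its leading $O(1/k)$ bias term is generically nonzero while vanishing as $k\to\infty$; this would also recover the ``biased but consistent, like WIS'' behavior described in the text, but it is more than \thref{thm:unbiasedNWIS} asks for.
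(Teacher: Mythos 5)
Your proposal is correct and takes essentially the same route as the paper: both choose the constant basis $\phi(\cdot)=[1]$ so that the NWIS estimator collapses to the ordinary WIS estimator of \eqref{eqn:WLS_rederived}, and then appeal to the bias of WIS. The only difference is that the paper stops there and cites \citet{precup2000eligibility} for WIS's bias, whereas you additionally construct an explicit two-armed bandit instance (with the correct computation, e.g.\ $\mathbb{E}[\hat J^\ddagger_{k+\delta}(\pi)]=\tfrac38\neq\tfrac14$ for $k=2$, $p=\tfrac14$) that makes the counter-example fully self-contained.
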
 

\begin{thm}[Consistent NIS] Under \thref{ass:stationary,ass:independentTraj}, for all $\delta \geq 1$, $\hat J_{k+\delta}(\pi)$ is a consistent estimator of $J(\pi)$. That is, as $N \rightarrow \infty,\, \hat J_{N+\delta}(\pi) \overset{\text{a.s.}}{\longrightarrow} J(\pi).$
\end{thm}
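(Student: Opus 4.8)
The plan is to show that the least-squares extrapolation error is, up to a mean-zero sampling fluctuation, exactly $J(\pi)$, and that the fluctuation vanishes almost surely. Set $\varepsilon_i \coloneqq \hat J_i(\pi) - J(\pi)$; by \thref{ass:stationary} (which gives $\mathbb{E}[\hat J_i(\pi)] = J(\pi)$, as already observed) each $\varepsilon_i$ has zero mean, and by \thref{ass:independentTraj} the $\varepsilon_i$ are mutually independent. Because $\phi(\cdot)$ contains the constant feature $1$ (condition (a)) and $\Phi_N$ has full column rank (condition (b)), writing $e_d$ for the indicator of the intercept column we have $\mathbf 1 = \Phi_N e_d$, hence $(\Phi_N^\top\Phi_N)^{-1}\Phi_N^\top \mathbf 1 = e_d$ and the extrapolation weights $\zeta^{(N)} \coloneqq \phi(N+\delta)(\Phi_N^\top\Phi_N)^{-1}\Phi_N^\top \in \mathbb{R}^{1\times N}$ satisfy $\zeta^{(N)}\mathbf 1 = \phi(N+\delta)e_d = 1$. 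Therefore, using \eqref{eqn:predict},
\[
\hat J_{N+\delta}(\pi) - J(\pi) \;=\; \zeta^{(N)} Y_N - J(\pi)\,\zeta^{(N)}\mathbf 1 \;=\; \sum_{i=1}^{N} \zeta^{(N)}_i \,\varepsilon_i ,
\]
so it suffices to prove that this weighted sum of independent, mean-zero terms converges to $0$ almost surely.

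The second step is to control the weights. Note that $\|\zeta^{(N)}\|_2^2 = \phi(N+\delta)(\Phi_N^\top\Phi_N)^{-1}\phi(N+\delta)^\top$. I would work in normalized time so that the $N$ design points are $\{i/N\}_{i=1}^N$, recognize $\bar G_N \coloneqq \tfrac1N \Phi_N^\top\Phi_N$ as a Riemann sum converging to the Gram matrix $G$ of the basis functions on the unit interval, and note that $G$ is nonsingular exactly because $\phi_1,\dots,\phi_{d-1},1$ are linearly independent as functions (a mild strengthening of condition (b)); the query point $N+\delta$ maps to $(N+\delta)/N \to 1$, so $\phi(N+\delta)$ stays bounded. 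Then $\|\zeta^{(N)}\|_2^2 = \tfrac1N \phi(N+\delta)\,\bar G_N^{-1}\,\phi(N+\delta)^\top = O(1/N)$, which in particular forces $\max_i |\zeta^{(N)}_i| \to 0$ and $\sum_i (\zeta^{(N)}_i)^2 \to 0$.

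For the third step I would invoke a strong law of large numbers for weighted sums of independent variables: since $\sum_i (\zeta^{(N)}_i)^2 \to 0$ and $\sup_i \operatorname{Var}(\varepsilon_i) < \infty$ (finite horizon, bounded rewards, and full-support behavior policies bound the per-decision importance ratios, hence bound the variance of the PDIS estimator $\hat J_i(\pi)$), a Kolmogorov/Kronecker-lemma argument gives $\sum_i \zeta^{(N)}_i \varepsilon_i \overset{\text{a.s.}}{\longrightarrow} 0$; in the special case of a single behavior policy, the $\hat J_i(\pi)$ are i.i.d.\ with mean $J(\pi)$ and the classical strong law after the Kronecker lemma suffices. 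Combining with the first step yields $\hat J_{N+\delta}(\pi) \overset{\text{a.s.}}{\longrightarrow} J(\pi)$, and since $\delta \ge 1$ only shifts the (bounded) query point the conclusion holds for every $\delta$.

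The main obstacle is the middle step: converting the qualitative conditions (a)–(b) into the quantitative statement that $(\Phi_N^\top\Phi_N)^{-1}$ decays like $1/N$ and that the extrapolation point does not sit where the fitted basis has exploding leverage. For the bounded Fourier basis this is fairly direct, but for an unbounded basis such as the identity basis it is sensitive to how the time index is normalized, so I would make the normalization explicit and verify nonsingularity of the limiting Gram matrix $G$ there; everything else is a routine application of \thref{ass:stationary}, \thref{ass:independentTraj}, and a standard law of large numbers.
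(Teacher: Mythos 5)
Your decomposition is the same one the paper uses: you write $\hat J_{N+\delta}(\pi)-J(\pi)$ as a linear combination of the mean-zero, independent errors $\varepsilon_i=\hat J_i(\pi)-J(\pi)$, using the fact that the extrapolation weights sum to one (which is precisely the paper's ``alternate'' unbiasedness argument, $\mathbf{A}\mathbbm{1}=e_d$); the paper reaches the identical expression by instead writing $Y=\Phi w^*+\epsilon$ with $w^*=[0,\dots,0,J(\pi)]^\top$. Your Riemann-sum reading of $\tfrac1N\Phi_N^\top\Phi_N\to G$ is a concrete and arguably more transparent way of verifying what the paper delegates to Grenander's conditions, and your explicit worry about normalizing the time index for unbounded bases is well placed (the paper simply assumes $|\phi_i(\cdot)|<C_1$ and normalizes time in practice).

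The one step that does not go through as written is the last one. From $\sum_i(\zeta^{(N)}_i)^2=O(1/N)$ and bounded variances you get $\operatorname{Var}\bigl(\sum_i\zeta^{(N)}_i\varepsilon_i\bigr)\to 0$, hence convergence in $L^2$ and in probability --- but not almost sure convergence, because $\{\zeta^{(N)}_i\}$ is a triangular array (every weight changes with $N$), so neither Kolmogorov's three-series/SLLN machinery nor the Kronecker lemma applies directly, and $O(1/N)$ is not summable, so Borel--Cantelli does not rescue it either. The repair is exactly the factorization the paper keeps: write
\begin{align}
\sum_{i=1}^N\zeta^{(N)}_i\varepsilon_i \;=\; \bigl[\phi(N+\delta)\,\bar G_N^{-1}\bigr]\cdot\Bigl[\tfrac1N\Phi_N^\top\varepsilon\Bigr],
\end{align}
where the first factor converges to a fixed vector and each component of the second factor is $\tfrac1N\sum_{i\le N}\phi_j(i)\varepsilon_i$, a normalized sum with weights $\phi_j(i)$ that do \emph{not} depend on $N$. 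Kolmogorov's strong law for independent, non-identically distributed summands (using $\sum_i\operatorname{Var}(\phi_j(i)\varepsilon_i)/i^2<\infty$, which follows from bounded features and bounded importance ratios) then gives almost sure convergence of the second factor to zero, and Slutsky's theorem finishes the proof. So: same route as the paper, correct in all essentials, but you should not collapse the weight vector before invoking the strong law.
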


\begin{thm}[Consistent NWIS] Under \thref{ass:stationary,ass:independentTraj}, for all $\delta \geq 1$, $\hat J_{k+\delta}^\ddagger(\pi)$ is a consistent estimator of $J(\pi)$. That is, as $N \rightarrow \infty, \, \hat J_{N+\delta}^\ddagger(\pi) \overset{\text{a.s.}}{\longrightarrow} J(\pi).$
\end{thm}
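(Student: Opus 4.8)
The plan is to run the same two-step ``limit of the least-squares solution, then identification of the limit'' argument used for consistency of the unweighted NIS estimator, but to additionally control the randomness that the weight matrix $\Lambda$ now injects into the Gram matrix. Recall that $\hat J^\ddagger_{N+\delta}(\pi) = \phi(N+\delta)(\Phi^\top \Lambda \Phi)^{-1}\Phi^\top \Lambda Y$, where $\Lambda$ is diagonal with $\Lambda_{ii}=\rho_i^\ddagger$, $Y_i = G_i$, and $\Phi$ is the deterministic basis matrix whose $i$-th row is $\phi(i)$. Since the weighted least-squares solution is invariant to a common rescaling of the weights, I would write $w^\ddagger = A_N^{-1} c_N$ with $A_N \coloneqq \frac1N\sum_{i=1}^N \rho_i^\ddagger\, \phi(i)^\top\phi(i)$ and $c_N \coloneqq \frac1N\sum_{i=1}^N \rho_i^\ddagger\, \phi(i)^\top G_i$. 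Under \thref{ass:stationary}, trajectory-wise importance sampling is unbiased, so $\mathbb{E}[\rho_i^\ddagger G_i] = J_i(\pi) = J(\pi)$, and likewise $\mathbb{E}[\rho_i^\ddagger] = 1$; these two moment identities are the only place stationarity is used.

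Next I would invoke a strong law of large numbers for independent, not-necessarily-identically-distributed summands (Kolmogorov's criterion). By \thref{ass:independentTraj} the pairs $(\rho_i^\ddagger, G_i)$ are independent across $i$; assuming bounded rewards and bounded importance ratios (the full-support condition of the setup) together with a basis whose vectors $\phi(i)$ are uniformly bounded, the summands in $A_N$ and $c_N$ have uniformly bounded variances, whence $A_N - \frac1N\Phi^\top\Phi \overset{\text{a.s.}}{\to} 0$ and $c_N - J(\pi)\,\frac1N\sum_i\phi(i)^\top \overset{\text{a.s.}}{\to} 0$. Under a mild regularity condition on the basis --- that the Ces\`aro limits $\bar A \coloneqq \lim_N \frac1N\Phi^\top\Phi$ and $\bar\mu \coloneqq \lim_N \frac1N\sum_i\phi(i)^\top$ exist with $\bar A$ invertible, which is the limiting counterpart of Property~(b) --- this gives $A_N \overset{\text{a.s.}}{\to}\bar A$ and $c_N \overset{\text{a.s.}}{\to}\bar\mu\, J(\pi)$, and hence, by continuity of matrix inversion on the set of invertible matrices, $w^\ddagger \overset{\text{a.s.}}{\to} \bar A^{-1}\bar\mu\, J(\pi)$.

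The last step is an identification argument relying on Property~(a), i.e.\ that $\phi$ contains a constant feature. Let $w_c \coloneqq J(\pi)\, e$, where $e$ is the indicator vector selecting the constant coordinate of $\phi$, so that $\phi(x)\,w_c = J(\pi)$ for every $x$. Then $\bar A\, w_c = \bigl(\lim_N\tfrac1N\sum_i\phi(i)^\top\phi(i)\bigr) w_c = \lim_N\tfrac1N\sum_i\phi(i)^\top J(\pi) = \bar\mu\, J(\pi)$, so $w_c$ satisfies the same limiting normal equations as $\bar A^{-1}\bar\mu\, J(\pi)$; invertibility of $\bar A$ forces $\bar A^{-1}\bar\mu\, J(\pi) = w_c$. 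Therefore $w^\ddagger \overset{\text{a.s.}}{\to} w_c$, and since $\phi(N+\delta)$ stays bounded while $\phi(N+\delta)\,w_c = J(\pi)$ holds identically, $\hat J^\ddagger_{N+\delta}(\pi) = \phi(N+\delta)\,w^\ddagger \overset{\text{a.s.}}{\to} J(\pi)$, which is the claim.

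I expect the main obstacle to be the limit $A_N \to \bar A$: unlike the NIS case, where $\frac1N\Phi^\top\Phi$ is deterministic, here the Gram matrix is a random average of non-identically-distributed matrices, so one needs the non-i.i.d.\ SLLN together with enough integrability on the importance ratios to guarantee both convergence and that the limit is invertible (so $A_N^{-1}$ is eventually well-defined and converges). The other delicate point is the moving query point $\phi(N+\delta)$, which is only harmless for bounded bases (e.g.\ Fourier); a fully general statement would need either an explicit boundedness hypothesis on $\phi$ or a rescaling of the time axis. A lighter alternative is to reduce to the known WIS consistency result of \citet{mahmood2014weighted} in the constant-basis case and then argue, via the identification step above, that appending smooth bounded features leaves the limit unchanged.
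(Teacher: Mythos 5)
Your proposal is correct and follows essentially the same route as the paper: both arguments split the weighted Gram matrix $\tfrac1N\Phi^\top\Lambda\Phi$ using $\mathbb{E}[\rho_i^\ddagger]=1$ plus a Kolmogorov-type SLLN to show it converges to $\lim_N \tfrac1N\Phi^\top\Phi$, use unbiasedness of $\rho_i^\ddagger G_i$ to reduce $\tfrac1N\Phi^\top\Lambda Y$ to $\tfrac1N\Phi^\top\Phi\, w^* + \tfrac1N\Phi^\top\epsilon$, and then identify the limit via the constant feature so that $\phi(N+\delta)w^* = J(\pi)$. The regularity concerns you flag at the end (invertibility of the limiting Gram matrix, boundedness of the basis) are exactly what the paper handles via Grenander's conditions and its boundedness assumptions on $\phi$.
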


\begin{proof}
    See Appendix \ref{apx:sec:properties} for all of these proofs.
\end{proof}
NWIS is biased and  consistent like the WIS estimator, and our experiments show that it also has similar variance reduction properties that can make the optimization process more efficient for non-stationary MDPs when the variance of $\hat J_i(\pi)$ is high.

\section{Empirical Analysis}

This section presents empirical evaluations using several environments inspired by real-world applications that exhibit non-stationarity.
In the following paragraphs, we briefly discuss each environment; a more detailed description is available in Appendix \ref{apx:sec:implementation}.

\begin{figure*}[t]
    \centering
    \includegraphics[width=0.32\textwidth]{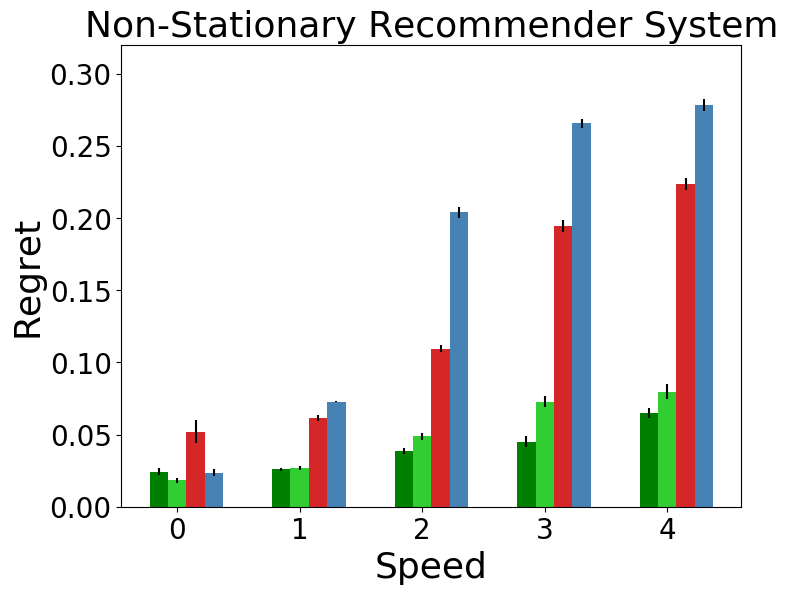}      \includegraphics[width=0.3\textwidth]{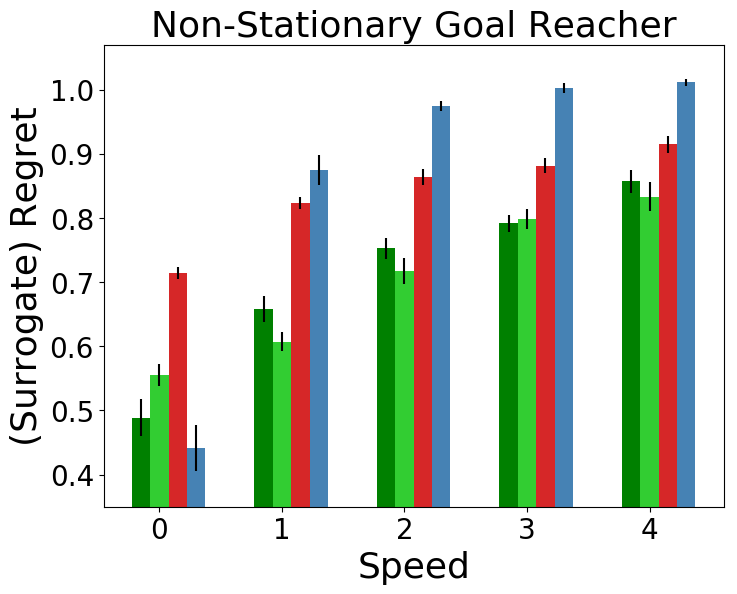}  
    \includegraphics[width=0.3\textwidth]{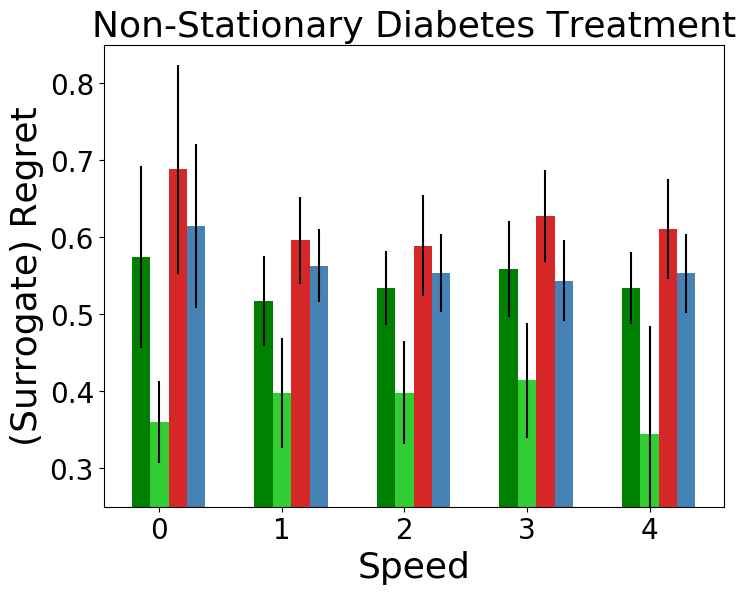} \\
    \includegraphics[width=0.5\textwidth]{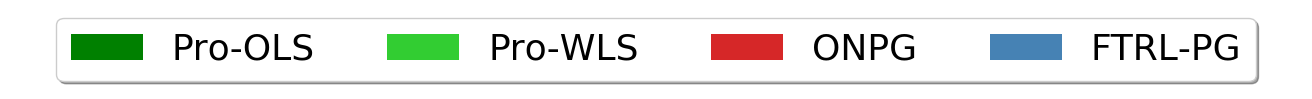} 
    \caption{ 
    Best performances of all the algorithms obtained by conducting a hyper-parameter sweep over $2000$ hyper-parameter combinations per algorithm, per environment.
    For each hyper-parameter setting, $30$ trials were executed for the recommender system and the goal reacher environments, and $10$ trials for the diabetes treatment environment.
    Error bars correspond to the standard error.
    The x-axis represents how fast the environment is changing and the y-axis represents regret (lower is better).
    Individual learning curves for each speed, for each domain, is available in Appendix \ref{apx:sec:results}.
    }
    \label{fig:results}
\end{figure*}

\textbf{Non-stationary Diabetes Treatment:}

This environment is based on an open-source implementation \citep{simglucose} of the FDA approved Type-1 Diabetes Mellitus simulator (T1DMS) \citep{man2014uva} for treatment of Type-1 Diabetes.
Each episode consists of a day in an \textit{in-silico} patient's life.
Consumption of a meal increases the blood-glucose level in the body and if the blood-glucose level becomes too high, then the patient suffers from hyperglycemia and if the level becomes too low, then the patient suffers from hypoglycemia.
The goal is to control the blood-glucose level by regulating the insulin dosage to minimize the risk associated with both hyper and hypoglycemia.

However, the insulin sensitivity of a patient's internal body organs vary over time, inducing non-stationarity that should be accounted for.
In the T1DMS simulator, we induce this non-stationarity by oscillating the body parameters (e.g., insulin sensitivity, rate of glucose absorption, etc.)  between two known configurations available in the simulator. 

\textbf{Non-stationary Recommender System:} In this environment a recommender engine interacts with a user whose interests in different items fluctuate over time.
In particular, the rewards associated with each item vary in seasonal cycles.
The goal is to maximize revenue by recommending an item that the user is most interested in at any time.

\textbf{Non-stationary Goal Reacher:}
This is a 2D environment with four (left, right, up, and down) actions and a continuous state set representing the Cartesian coordinates.
The goal is to make the agent reach a moving goal position.

For all of the above environments, we regulate the \textit{speed} of non-stationarity to characterize an algorithms' ability to adapt.
Higher speed corresponds to a greater amount of non-stationarity;  
A speed of zero indicates that the environment is stationary.

We consider the following algorithms for comparison:

\textbf{Prognosticator:} 
    Two variants of our algorithm, \textbf{Pro-OLS} and \textbf{Pro-WLS}, which use OLS and WLS estimators for $\Psi$. 
    %
    %

\textbf{ONPG:}
Similar to the adaptation technique presented by \citet{al2017continuous}, this baseline performs purely online optimization by fine-tuning the existing policy
using only the trajectory being observed online. 

\textbf{FTRL-PG:} Similar to the adaptation technique presented by \citet{finn2019online}, this baseline performs Follow-the-(regularized)-leader optimization by maximizing performance over both the current and all the past trajectories.

\subsection{Results}

In the non-stationary recommender system, as the exact value of $J_k^*$ is available from the simulator, we can compute the true value of regret.
However, for the non-stationary goal reacher and diabetes treatment environment, as $J_k^*$ is not known for any $k$, we use a surrogate measure for regret. 
That is, let $\tilde J_k^*$ be the maximum return obtained in episode $k$ by any algorithm, then we use $(\sum_{k=1}^N (\tilde J_k^* -  J_k(\pi)))/(\sum_{k=1}^N  \tilde J_k^*)$ as the surrogate regret for a policy $\pi$.

In the non-stationary recommender system, all the methods perform nearly the same when the environment is stationary.
FTRL-PG has a slight edge over ONPG when the environment is stationary as all the past data is directly indicative of the future MDP.
It is interesting to note that while FTRL-PG works the best for the stationary setting in the recommender system and the goal reacher task, it is not the best in the diabetes treatment task as it can suffer from high variance.
We discuss the impact of variance in later paragraphs.

With the increase in the speed of non-stationarity, performance of both the baselines deteriorate quickly.
Of the two, ONPG is better able to mitigate performance lag as it discards all the past data.
In contrast, both the proposed methods, Pro-OLS and Pro-WLS, can leverage all the past data to better capture the impact of non-stationarity and thus are consistently more robust to the changes in the environment.

In the non-stationary goal reacher environment, a similar trend as above is observed.
While considering all the past data equally is useful for FTRL-PG in the stationary setting, it creates drastic performance lag as the speed of the non-stationarity increases.
 Between Pro-OLS and Pro-WLS, in the stationary setting, once the agent nearly solves the task all subsequent trajectories come from nearly the same distribution and thus the variance resulting from importance sampling ratio is not severe.
 In such a case, where the variance is low, Pro-WLS has less advantage over Pro-OLS and additionally suffers from being biased.
However, as the non-stationarity increases, the optimal policy keeps changing and there is a higher discrepancy between distributions of past and current trajectories.
This makes the lower variance property of Pro-WLS particularly useful.
Having the ability to better capture the underlying trend, both Pro-OLS and Pro-WLS consistently perform better than the baselines when there is non-stationarity.

The non-stationary diabetes treatment environment is particularly challenging as it has a continuous action set.
This makes importance sampling based estimators subject to much higher variance.
Consequently, Pro-OLS is not able to reliably capture the impact of non-stationarity and performs similar to FTRL-PG.
In comparison, ONPG is data-inefficient and performs poorly on this domain across all the speeds.
The most advantageous algorithm in this environment is Pro-WLS. 
Since Pro-WLS is designed to better tackle variance stemming from importance sampling, it is able to efficiently use the past data to capture the underlying trend and performs well across all the speeds of non-stationarity.

\section{Conclusion}

We presented a policy gradient-based algorithm that combines counter-factual reasoning with curve-fitting to proactively search for a good policy for future MDPs.
Irrespective of the environment being stationary or non-stationary, the proposed method can leverage all the past data, and in non-stationary settings it can  pro-actively optimize for future performance as well. 
Therefore, our method provides a single solution for mitigating performance lag and  being data-efficient.

While the proposed algorithm has several desired properties, many open questions remain.
In our experiments, we noticed that the proposed algorithm is particularly sensitive to the value of the entropy regularizer $\lambda$. 
Keeping $\lambda$ too high prevents the  policy from adapting quickly.
Keeping $\lambda$ too low lets the policy overfit to the forecast and become close to deterministic, thereby increasing the variance for subsequent importance sampling estimates of policy performance.
While we resorted to hyper-parameter search, leveraging  methods that adapt $\lambda$ automatically might be fruitful \citep{haarnoja2018soft}.

Our framework highlights new research directions for studying bias-variance trade-offs in the non-stationary setting.
While tackling the problem from the point of view of a univariate time-series is advantageous as the model-bias of the environment can be reduced, this can result in higher variance in the forecasted performance.   
Developing lower variance off-policy performance estimators is also an active research direction which directly complements our algorithm.
In particular, often a partial model of the environment is available and using it through doubly-robust estimators \citep{jiang2015doubly,thomas2016data} is an interesting future direction. 

Further, there are other forecasting functions, like kernel regression, Gaussian Processes, ARIMA, etc., and some break-point detection algorithms that can potentially be used to incorporate more domain knowledge in the forecasting function $\Psi$, or make $\Psi$ robust to jumps and auto-correlations in the time series.

\section{Acknowledgement}

Part of the work was done when the first author was an intern at Adobe Research, San Jose.
The research was later supported by generous gifts from Adobe Research.
We are thankful to Ian Gemp, Scott M. Jordan, and Chris Nota for insightful discussions and for providing valuable feedback.

Research reported in this paper was also sponsored in part by the CCDC Army Research Laboratory under Cooperative Agreement W911NF-17-2-0196 (ARL IoBT CRA). The views and conclusions contained in this document are those of the authors and should not be interpreted as representing the official policies, either expressed or implied, of the Army Research Laboratory or the U.S. Government. The U.S. Government is authorized to reproduce and distribute reprints for Government purposes notwithstanding any copyright notation herein.

\bibliography{mybib}
\bibliographystyle{icml2020}

\clearpage
\appendix
\setcounter{lemma}{0}
\setcounter{thm}{0}
\setcounter{cor}{0}
\setcounter{prop}{0}

\onecolumn

\icmltitle{Optimizing for the Future in Non-Stationary MDPs (Supplementary Material)}



\section{Proofs for the Properties of the NIS and NWIS Estimators}
\label{apx:sec:properties}

Here we provide proofs for the  properties of the NIS and NWIS estimators.
While NIS and NWIS are developed for the non-stationary setting, these properties ensure that these estimators 
generalize to the stationary setting as well.
That is, when used in a stationary setting, the NIS estimator is both unbiased and consistent like the PDIS estimator, and the NWIS estimator is biased and consistent like the WIS estimator.
%

Our proof technique draws inspiration from the results presented by \citet{mahmood2014weighted}.
%
%
The key modification that we make to leverage their proof technique is that instead of using the features of the state as the input and the observed return from that corresponding state as the target to the regression function, we use the features of the \textit{time index of an episode} as the input and the observed return for that corresponding episode as the target.
In their setup, because 
states are drawn stochastically 
from a distribution, their analysis is not directly applicable to our setting where 
inputs 
are time indices that form a deterministic sequence.
For analysis of our estimators, we leverage techniques discussed by \citet{greene2003econometric} for analyzing properties of the ordinary least squares estimator.

Before proceeding, we impose the following constraints on the set of policies, and the basis functions $\phi_i : \mathbb{N} \rightarrow \mathbb{R}$ used for encoding the time index in both $\Psi$ and $\Psi^\ddagger$, with $\phi(\cdot) = [\phi_1(\cdot), ..., \phi_{d-1}(\cdot), 1]$. 

%
    \textbf{(a)} $\phi(\cdot)$ always contains $1$ to incorporate a bias coefficient in least-squares regression (for example, $\phi(\cdot) = [\phi_1(\cdot), ..., \phi_{d-1}(\cdot), 1]$, where $\forall i \in [1, d-1]$, $\phi_i(\cdot)$ is a basis function).\footnote{If additional domain knowledge is available to select an appropriate basis function that can be used to represent the performance trend of all the policies for the given non-stationary environment, then all the following finite-sample and large-sample properties can be extended for that environment as well, using that basis function.}
%

\textbf{(b)} There exists a finite constant $C_1$, such that $\forall i, \,\, | \phi_i(\cdot)| < C_1$.

\textbf{(c)} $\Phi$ has full column rank such that $(\Phi^\top \Phi)^{-1}$ exists.

\textbf{(d)} We only consider a set of policies $\Pi$ that have non-zero probability of taking any action in any state. 
That is, $\exists C_2 > 0$, such that $\forall \pi \in \Pi, \forall s \in \mathcal S, \forall a \in \mathcal{A}, \pi(a|s) > C_2.$
%

Satisfying condition (a) is straightforward as it is typically already satisfied by all basis functions. 
This constraint ensures that the regression based forecasting function can capture a fixed constant that is required to model the absence of any trend.
This constraint is useful for our purpose as in the stationary setting there exists no trend in the expected performance across episodes for any given policy.

Conditions (b) and (c) are also readily satisfied by popular basis functions. 
For example, features from the Fourier basis are bounded by $[-1, 1]$, and features from polynomial/identity bases are also bounded when inputs are adequately normalized.
Further, when the basis function does not repeat any feature, and there are more samples than the number of features ($k \geq d$), condition (c) is satisfied.
This ensures that the least-squares problem is well-defined and has a unique-solution.

Condition (d)  ensures that the denominator in any importance ratio is always bounded below, such that the importance ratios are bounded above.
This implies that the importance sampling estimator for any policy has finite variance.
Use of entropy regularization with common policy parameterizations (softmax/Gaussian) can prevent violation of this condition.

In the following, we first establish the finite-sample properties and then we establish the large-sample properties for the NIS and NWIS estimators.
Before proceeding further, recall from \eqref{eqn:predict} and \eqref{eqn:predictweighted} that the NIS and NWIS estimators are given by:
%
\begin{align}
    \hat J_{k+\delta}(\pi) &=  \phi(k+\delta) w  = \phi(k+\delta)(\Phi^\top  \Phi)^{-1}\Phi^\top Y
    \\
    \hat J^\ddagger _{k+\delta}(\pi) &=  \phi(k+\delta) w^\ddagger  = \phi(k+\delta)(\Phi^\top \Lambda \Phi)^{-1}\Phi^\top \Lambda Y.
    \\
    \,
\end{align}

\subsection{Finite Sample Properties}
\label{apx:sub:finitesample}
In this subsection, finite sample properties of NIS and NWIS are presented.
Specifically, it is established that NIS is an unbiased estimator, whereas NWIS is a biased estimator of $J(\pi)$, where $J(\pi)$ is the performance of a policy $\pi$ in a stationary MDP.

\begin{thm}[Unbiased NIS] Under \thref{ass:stationary,ass:independentTraj}, for all $\delta \geq 1$, $\hat J_{k+\delta}(\pi)$ is an unbiased estimator of $J(\pi)$. That is, $\mathbb{E}[\hat J_{k+\delta}(\pi)] = J(\pi)$.
\thlabel{apx:thm:unbiasedNIS}
\end{thm}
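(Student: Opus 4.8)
The plan is to exploit the fact that the OLS forecasting operator mapping the vector of past performance estimates to the future prediction is a \emph{deterministic} linear map, so that the expectation can simply be pushed through it. Write $\hat J_{k+\delta}(\pi) = \phi(k+\delta)(\Phi^\top\Phi)^{-1}\Phi^\top Y$ with $Y = [\hat J_1(\pi),\dots,\hat J_k(\pi)]^\top$. Since $\Phi$ and $\phi(k+\delta)$ depend only on the deterministic time indices $1,\dots,k+\delta$ and the fixed basis $\phi$ — and $(\Phi^\top\Phi)^{-1}$ exists by condition (c) — the row vector $c^\top \coloneqq \phi(k+\delta)(\Phi^\top\Phi)^{-1}\Phi^\top \in \mathbb{R}^{1\times k}$ is nonrandom. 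Hence $\mathbb{E}[\hat J_{k+\delta}(\pi)] = c^\top \mathbb{E}[Y]$ by linearity of expectation; note this interchange does not actually require \thref{ass:independentTraj}, only that each $\mathbb{E}[\hat J_i(\pi)]$ is well defined.

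Next I would invoke \thref{ass:stationary} together with the unbiasedness of the PDIS estimator (the remark $\mathbb{E}[\hat J_i(\pi)] = J(\pi)$ recorded immediately after the assumption), which gives $\mathbb{E}[Y] = J(\pi)\,\mathbf{1}$ with $\mathbf{1}\in\mathbb{R}^{k}$ the all-ones vector. So $\mathbb{E}[\hat J_{k+\delta}(\pi)] = J(\pi)\,c^\top\mathbf{1} = J(\pi)\,\phi(k+\delta)(\Phi^\top\Phi)^{-1}\Phi^\top\mathbf{1}$, and it remains to show this scalar factor equals $1$.

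The key step is condition (a): because $\phi(\cdot)$ contains the constant feature $1$ as its $d$-th coordinate, the all-ones vector lies in the column space of $\Phi$ — explicitly $\mathbf{1} = \Phi e_d$, where $e_d$ is the $d$-th standard basis vector of $\mathbb{R}^d$. Substituting, $(\Phi^\top\Phi)^{-1}\Phi^\top\mathbf{1} = (\Phi^\top\Phi)^{-1}(\Phi^\top\Phi)e_d = e_d$, and then $\phi(k+\delta)e_d = 1$ since the $d$-th coordinate of $\phi(k+\delta)$ is the constant feature. Combining the pieces yields $\mathbb{E}[\hat J_{k+\delta}(\pi)] = J(\pi)\cdot 1 = J(\pi)$, which is the claim; the argument is valid for every $\delta \geq 1$ (indeed for any extrapolation index) because it uses only the presence of the constant feature in $\phi(k+\delta)$, never its numerical value.

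I do not expect a genuine obstacle here: the entire content is that least-squares reproduces constants exactly, so a forecast built on unbiased, equal-mean past estimates is itself unbiased. The only points needing a line of care are (i) observing that the hat operator is deterministic because $\Phi$ is built from deterministic time indices with $(\Phi^\top\Phi)^{-1}$ guaranteed by condition (c), which legitimizes interchanging expectation with the linear map, and (ii) the membership $\mathbf{1}\in\operatorname{col}(\Phi)$, which is exactly what condition (a) secures.
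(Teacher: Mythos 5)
Your proposal is correct and is essentially the paper's own argument: the paper likewise pushes the expectation through the deterministic OLS operator, uses $\mathbb{E}[Y]=J(\pi)\mathbf{1}=\Phi w^*$ with $w^*=J(\pi)e_d$ (your $J(\pi)\,\Phi e_d$ is the same vector), and cancels $(\Phi^\top\Phi)^{-1}(\Phi^\top\Phi)$ to conclude $\phi(k+\delta)w^*=J(\pi)$. Your side remark that \thref{ass:independentTraj} is not actually needed for unbiasedness is also accurate, and your weight-vector $c^\top$ viewpoint matches the paper's alternate proof, which verifies directly that the entries of $\phi(k+\delta)(\Phi^\top\Phi)^{-1}\Phi^\top$ sum to one.
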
 
\begin{proof}
Recall from \eqref{eqn:predict} that
    \begin{align}
        \hat J_{k+\delta}(\pi) &= \phi(k+\delta)w = \phi(k+\delta) (\Phi^\top \Phi)^{-1} \Phi^\top Y.
    \end{align}
Therefore, the expected value of    $\hat J_{k+\delta}(\pi)$ is 
    \begin{align}
        \mathbb{E}[\hat J_{k+\delta}(\pi)] &= \mathbb{E}\left[ \phi(k+\delta) (\Phi^\top \Phi)^{-1} \Phi^\top Y \right]
        \\
        %
        %
        &=  \phi(k+\delta) \left(\Phi^\top \Phi \right)^{-1} \left (\Phi^\top  \mathbb{E}\left[ Y \right] \right ). \label{apx:eqn:ey} 
    \end{align}

As $Y = [\hat J_0(\pi), ..., \hat J_k(\pi)]^\top$ and the MDP is stationary, the expected value of each element of $Y$ is $J(\pi)$.
Further, since $\phi(\cdot)$ always contains the bias co-efficient, and the performance of any policy in invariant to the episode number in a stationary MDP (\thref{ass:stationary}), the optimal parameter for the regression model is $w^* = [0,0,...,0, J(\pi)]^\top$,  
such that for any $k$,
\begin{align}
    \phi(k)w^* = [\phi_1(k), ..., \phi_{d-1}(k), 1][0,...,0, J(\pi)]^\top = J(\pi) \label{apx:eqn:wstar}. 
\end{align}
Therefore, $\mathbb{E}[Y] = \Phi w^*$.
%
        %
Using this observation in \eqref{apx:eqn:ey},
\begin{align}
        \mathbb{E}[\hat J_{k+\delta}(\pi)] %
        &=  \phi(k+\delta) \left(\Phi^\top \Phi \right)^{-1} \left (\Phi^\top  \Phi w^* \right )
        \\
        &=  \phi(k+\delta) \left(\Phi^\top \Phi \right)^{-1} \left (\Phi ^\top \Phi \right)w^*
        \\
        &=  \phi(k+\delta)w^*
        \\
        &= J(\pi).
    \end{align}
\end{proof}

\begin{proof} (Alternate) Here we present an alternate proof for \thref{apx:thm:unbiasedNIS} which does not require invoking $w^*$.
    \begin{align}
        \mathbb{E}\left[\hat J_{k+\delta}(\pi)\right]
        &= \mathbb{E} \left [ \phi(k+\delta) (\Phi^\top \Phi)^{-1}\Phi^\top Y \right ]
        \\
        &\overset{(a)}{=} \mathbb{E}\left[\sum_{i=0}^k  \left[ \phi(k + \delta) (\Phi^\top \Phi)^{-1}\Phi^\top \right]_i Y_i \right]  \\
        &\overset{(b)}{=} \sum_{i=0}^k  \left[ \phi(k + \delta) (\Phi^\top \Phi)^{-1}\Phi^\top \right]_i   \mathbb{E}\left[ Y_i\right], 
    \end{align}
    where (a) is the dot product written as summation, and (b) holds because the multiplicative constants are fixed values, as given in \eqref{eqn:extra-grad}.
    Since the environment is stationary, $\forall i \,\, \mathbb{E}\left[ Y_i \right] = J(\pi)$, therefore,
    \begin{align}
        \footnotesize
         \mathbb{E}\left[\hat J_{k+1}(\pi)\right] &= J(\pi)  \sum_{i=0}^k  \left[ \phi(k + \delta) (\Phi^\top \Phi)^{-1}\Phi^\top \right]_i. \label{eqn:sumi}
    \end{align}
In the following we focus on the terms inside the summation in \eqref{eqn:sumi}.
Without loss of generality, assume that for a given matrix of features $\Phi$, the feature corresponding to value $1$ is in the last column of $\Phi$.
Let $\mathbf{A} := (\Phi^\top \Phi)^{-1}\Phi^\top \in \mathbb{R}^{d\times k}$ , and let $B := \Phi[1:k, 1:d-1 ] \in \mathbb{R}^{k \times (d-1)}$ be the submatrix of $\Phi$ such that $\mathbf{B}$ has all features of $\Phi$ except the ones column, $\mathbbm{1} \in \mathbb{R}^{k\times 1}$.
Let $\mathbf{I}$ be the identity matrix in $\mathbb{R}^{d\times d}$, then it can seen that $(\Phi^\top \Phi)^{-1}(\Phi^\top \Phi)$ can be expressed as,
\begin{align}
\left[
    \begin{array}{c}
        \mathbf{A} 
    \end{array}
\right]
\left[
    \begin{array}{c|c}
        \mathbf{B}   & \mathbbm{1} 
    \end{array}
\right ] = \mathbf{I}, \label{eqn:matrix}
\end{align}

This implies $[\mathbf{A}\mathbf{B} \,\,\,  \mathbf{A}\mathbbm{1}] = \mathbf{I}$. 
Therefore, as the $j^{th}$ row in last column of $\mathbf{I}$ corresponds to the dot product of the $j^{\text{th}}$ row of $\mathbf A$, $\mathbf{A_j}$, with $\mathbbm{1}$, 
\begin{align}
    \mathbf A_j \mathbbm{1} &= \begin{cases}
                                0 & j \neq d, \\
                                1 & j = d.
                            \end{cases} \label{eqn:sums}
\end{align}
 Equation \eqref{eqn:sums} ensures that the summation of all rows of $\mathbf A$, except the last, sum to $0$, and the last one sums to $1$.
 Now, let $\phi(k+\delta) := [\phi_1(k+\delta), \phi_2(k+\delta), ..., \phi_{d-1}(k+ \delta), 1] \in \mathbb{R}^{1 \times d}$.
 Therefore,
 \begin{align}
     \sum_{i=1}^k  \left[ \phi(k+\delta) (\Phi^\top \Phi)^{-1}\Phi^\top \right]_i &= \sum_{i=1}^k \left [ \phi(k+\delta) \mathbf{A} \right]_i \\ 
     &= \sum_{i=1}^k \sum_{j=1}^d \left[\phi(k+\delta)\right]_j \mathbf A_{j,i} \\
     &= \sum_{j=1}^d \left[\phi(k+\delta)\right]_j \sum_{i=1}^k \mathbf A_{j,i} \\
     &= \left(  \sum_{j=1}^{d-1} \left[\phi(k+\delta)\right]_j \sum_{i=1}^k \mathbf A_{j,i} \right)  + \left( \left[\phi(k+\delta)\right]_d \sum_{i=1}^k \mathbf A_{d,i}  \right )\\
     &= \left(  \sum_{j=1}^{d-1} \left[\phi(k+\delta)\right]_j (\mathbf A_{j}\mathbbm{1}) \right)  + \left( \left[\phi(k+\delta)\right]_d (\mathbf A_{d}\mathbbm{1})  \right )\\
     &= \left( \sum_{j=1}^{d-1} \left[\phi(k+\delta)\right]_j \cdot 0 \right) + \left( \left[\phi(k+\delta)\right]_d \cdot 1 \right)\\
     &= \left[\phi(k+\delta)\right]_d \\
     &= 1. \label{eqn:sum1}
 \end{align}
Therefore, combining \eqref{eqn:sum1} with  \eqref{eqn:sumi},
    $
         \mathbb{E}\left[\hat J_{k+\delta}(\pi)\right] = J(\pi). 
    $
\end{proof}

\begin{thm}[Biased NWIS] Under \thref{ass:stationary,ass:independentTraj}, for all $\delta \geq 1$, $\hat J^\ddagger _{k+\delta}(\pi)$ may be a \textit{biased} estimator of $J(\pi)$. That is, it is possible that $\mathbb{E}[\hat J^\ddagger _{k+\delta}(\pi)] \neq J(\pi)$.
\thlabel{apx:thm:unbiasedNWIS}
\end{thm}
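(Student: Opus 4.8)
The plan is to establish the statement by exhibiting an explicit counterexample: a stationary MDP, a pair of policies $\pi$ and $\beta$, and an admissible basis $\phi$ for which $\mathbb{E}[\hat J^\ddagger _{k+\delta}(\pi)] \neq J(\pi)$. The reason to expect this is structural. Writing $\hat J^\ddagger _{k+\delta}(\pi) = \phi(k+\delta)(\Phi^\top \Lambda \Phi)^{-1}\Phi^\top \Lambda Y$ with $\Lambda_{ii} = \rho_i^\ddagger$ and $Y_i = G_i$, the estimator is a \emph{ratio} of random quantities --- the random importance weights enter $\Lambda$ on both sides of the inverse --- so, unlike the NIS estimator (whose unbiasedness in \thref{apx:thm:unbiasedNIS} relied on its being \emph{linear} in $Y$ with deterministic coefficients), the expectation cannot be pushed through the nonlinearity. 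This is exactly the mechanism by which ordinary WIS is biased, and NWIS inherits it.

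Concretely, I would take the simplest basis allowed by conditions (a)--(c), namely $\phi(\cdot) \equiv 1$ (so $d = 1$ and $\Phi = \mathbbm{1} \in \mathbb{R}^{k\times 1}$). Then $\Phi^\top \Lambda \Phi = \sum_{i=1}^k \rho_i^\ddagger$ and $\Phi^\top \Lambda Y = \sum_{i=1}^k \rho_i^\ddagger G_i$ are scalars, and for every $\delta \geq 1$ the NWIS estimator collapses to the ordinary WIS estimator $\hat J^\ddagger _{k+\delta}(\pi) = \big(\sum_{i=1}^k \rho_i^\ddagger G_i\big)\big/\big(\sum_{i=1}^k \rho_i^\ddagger\big)$ of \eqref{eqn:WLS_rederived}. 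It therefore suffices to show that this quantity has expectation $\neq J(\pi)$ for some instance.

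For the instance I would use a one-state, one-step MDP (a two-armed bandit): actions $\{a_1,a_2\}$, deterministic rewards $r_1 \neq r_2$, behavior policy $\beta(a_1) = \beta(a_2) = 1/2$, and target $\pi(a_1) = p$ with $p \in (0,1)$ and $p \neq 1/2$; then $J(\pi) = p\,r_1 + (1-p)\,r_2$. Taking $k = 1$ trajectory, the single weight cancels (note $\rho_1^\ddagger > 0$ almost surely since $p \in (0,1)$), so $\hat J^\ddagger _{1+\delta}(\pi) = G_1$ and $\mathbb{E}[\hat J^\ddagger _{1+\delta}(\pi)] = \mathbb{E}_\beta[G_1] = \tfrac12 r_1 + \tfrac12 r_2 \neq J(\pi)$. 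If a larger sample is preferred, the same conclusion follows for $k = 2$ by averaging the ratio over the four equally likely action pairs and checking that it differs from $J(\pi)$; the $k = 1$ case already yields the claimed ``it is possible.'' Finally I would check the standing assumptions: $\phi$ contains the constant and is bounded, $\Phi = [1]$ has full column rank (here $k = d = 1$), and $\pi,\beta$ are bounded away from $0$ and $1$, so (a)--(d) and \thref{ass:stationary,ass:independentTraj} all hold.

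The main thing to get right is not any computation but the bookkeeping around the regularity conditions and well-definedness: one must pick $k \geq d$ so that $\Phi$ has full column rank, keep all action probabilities strictly positive so that the denominator $\sum_i \rho_i^\ddagger$ is almost surely nonzero (and the weights bounded), and confirm that with $\phi \equiv 1$ the NWIS formula genuinely reduces to the stated scalar ratio --- which is immediate once $\Phi^\top\Lambda\Phi$ and $\Phi^\top\Lambda Y$ are recognized as scalars. No genuine obstacle remains after that.
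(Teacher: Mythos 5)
Your proposal is correct and takes essentially the same route as the paper: the paper's proof also chooses the constant basis $\phi(\cdot) = [1]$, observes that the NWIS estimator then collapses to the ordinary WIS estimator of \eqref{eqn:WLS_rederived}, and concludes bias from the known bias of WIS (by citation to \citet{precup2000eligibility}). The only difference is that you go one step further and make the argument self-contained by exhibiting an explicit two-armed bandit with $k=1$ where the single importance weight cancels and the expectation equals the behavior policy's value rather than $J(\pi)$ --- a correct and slightly more complete version of the same argument.
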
 

\begin{proof}
 We prove this result using a simple counter-example.
 Consider the following basis function, $\phi(\cdot) = [1]$: 
\begin{align}
 J^\ddagger_{k+\delta}(\pi) &= \phi(k+\delta)w^\ddagger\\
 &= \phi(k+\delta) \,\, \underset{c \in \mathbb{R}^{1 \times 1}}{\text{argmin}} \frac{1}{n}\sum_{i=1}^n \rho_i(0,T)(G_i - c \phi(i))^2 \\
  &= \underset{c \in \mathbb{R}^{1 \times 1}}{\text{argmin}} \frac{1}{n}\sum_{i=1}^n \rho_i(0,T)(G_i - c)^2
  \\
  &= \frac{\sum_{i=1}^n \rho_i(0,T)G_i}{\sum_{i=1}^n \rho_i(0,T)},
\end{align}
which is the WIS estimator.
Therefore, as WIS is a biased estimator \cite{precup2000eligibility}, NWIS is also a biased estimator of $J(\pi)$.
\end{proof}

\subsection{Large Sample Properties}
\label{apx:sub:largesample}

In this subsection, large sample properties of NIS and NWIS are presented.
Specifically, it is established that both NIS and NWIS are consistent estimators of, $J(\pi)$, the performance of a policy $\pi$ for a stationary MDP.


\begin{thm}[Consistent NIS] Under \thref{ass:stationary,ass:independentTraj}, for all $\delta \geq 1$, $\hat J_{k+\delta}(\pi)$ is a consistent estimator of $J(\pi)$. That is, as $N \rightarrow \infty,\, \hat J_{N+\delta}(\pi) \overset{\text{a.s.}}{\longrightarrow} J(\pi).$
\thlabel{thm:consistentNIS}
\end{thm}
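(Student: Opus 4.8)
The plan is to follow the classical route for consistency of ordinary least squares with deterministic regressors: write the regression target as a constant ``trend'' plus mean-zero noise, show that the OLS coefficient vector converges almost surely to the noise-free optimum, and then read off the forecast at time $N+\delta$.

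First I would reuse the decomposition from the proof of \thref{apx:thm:unbiasedNIS}. Since the MDP is stationary, $\mathbb{E}[\hat J_i(\pi)] = J(\pi)$ for every $i$, and because $\phi(\cdot)$ contains the constant feature, the regression target admits the representation $Y = \Phi w^* + \epsilon$, where $w^* = [0,\dots,0,J(\pi)]^\top$ satisfies $\phi(k)w^* = J(\pi)$ for all $k$, and $\epsilon_i \coloneqq \hat J_i(\pi) - J(\pi)$. By \thref{ass:independentTraj} the $\epsilon_i$ are independent with mean zero, and by conditions (b) and (d) the basis features and the importance ratios $\rho_i(0,T)$ are bounded, so each $\hat J_i(\pi)$, and hence each $\epsilon_i$, is bounded and has uniformly bounded variance $\sigma^2 < \infty$. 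Substituting into the OLS solution gives
\begin{align}
 w_N = (\Phi^\top\Phi)^{-1}\Phi^\top Y = w^* + \left(\tfrac1N\Phi^\top\Phi\right)^{-1}\left(\tfrac1N\Phi^\top\epsilon\right).
\end{align}

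Next I would control the two factors separately. For the noise term, the $j$-th component of $\tfrac1N\Phi^\top\epsilon$ is $\tfrac1N\sum_{i=1}^N \phi_j(i)\epsilon_i$, a normalized sum of independent mean-zero random variables whose variances are bounded by $C_1^2\sigma^2$; Kolmogorov's strong law of large numbers applies, since $\sum_{i}\mathrm{Var}(\phi_j(i)\epsilon_i)/i^2 \le C_1^2\sigma^2\sum_i i^{-2} < \infty$, so $\tfrac1N\Phi^\top\epsilon \overset{\text{a.s.}}{\longrightarrow} 0$. For the Gram term I would invoke a Grenander-type regularity condition on the deterministic sequence of time-index features (of the kind used for deterministic-regressor OLS in \citet{greene2003econometric}): the normalized Gram matrix $\tfrac1N\Phi^\top\Phi$ converges to a nonsingular matrix $Q$, so that $(\tfrac1N\Phi^\top\Phi)^{-1}\to Q^{-1}$, which is bounded. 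Combining the two factors yields $w_N \overset{\text{a.s.}}{\longrightarrow} w^*$.

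Finally, since $\hat J_{N+\delta}(\pi) = \phi(N+\delta)w_N$ and $\phi(N+\delta)w^* = J(\pi)$ for every $N$, I would bound $|\hat J_{N+\delta}(\pi) - J(\pi)| = |\phi(N+\delta)(w_N - w^*)| \le \|\phi(N+\delta)\|\,\|w_N - w^*\| \le \sqrt{d}\,C_1\,\|w_N - w^*\| \to 0$ almost surely, which is the claim. The main obstacle is the Gram-matrix step: unlike in the standard OLS consistency argument, the regressors here form a fixed (non-random) sequence of time indices, so a law of large numbers for the regressors is unavailable and one must instead verify that this sequence, under the chosen (appropriately rescaled) basis, is rich enough that $\tfrac1N\Phi^\top\Phi$ has a nonsingular limit, i.e., that no trend direction becomes asymptotically unidentifiable. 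Everything else, the noise SLLN and the concluding continuity estimate, is routine given the boundedness guaranteed by conditions (b) and (d).
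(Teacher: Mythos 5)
Your proposal is correct and follows essentially the same route as the paper's proof: the same decomposition $Y=\Phi w^*+\epsilon$, Grenander-type conditions for the nonsingular limit of $\tfrac1N\Phi^\top\Phi$, and Kolmogorov's strong law for $\tfrac1N\Phi^\top\epsilon\to 0$. Your final step is in fact slightly more careful than the paper's, since you explicitly bound $\|\phi(N+\delta)\|\le\sqrt{d}\,C_1$ to handle the fact that the evaluation point moves with $N$, where the paper instead appeals to Slutsky's theorem and leaves this dependence implicit.
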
 

\begin{proof} The proof follows from the standard consistency result for ordinary least-squares regression \cite{greene2003econometric}.
Formally, using \eqref{eqn:predict}, 
    \begin{align}
        \underset{N \rightarrow \infty}{\lim} \quad \hat J_{N+\delta}(\pi) &= \underset{N \rightarrow \infty}{\lim} \quad \phi(N+\delta)w 
        \\
        &= \underset{N \rightarrow \infty}{\lim} \quad \phi(N+\delta) (\Phi^\top  \Phi)^{-1} \Phi^\top  Y.
    \end{align}
Since $Y = [\hat J_0(\pi), ..., \hat J_N(\pi)]^\top$ and the MDP is stationary,  each element of $Y$ is an unbiased estimate of $J(\pi)$.
In other words, $\forall i \in [0, N], \hat J_i(\pi) = J(\pi) + \epsilon_i$, where $\epsilon_i$ is a mean zero error.
Let $\epsilon \in \mathbb{R}^{N\times 1}$ be the vector containing all the error terms $\epsilon_i$.
Now, using \eqref{apx:eqn:wstar},
    \begin{align}
           %
        %
        \underset{N \rightarrow \infty}{\lim} \quad \hat J_{N+\delta}(\pi) &= \underset{N \rightarrow \infty}{\lim} \quad \phi(N+\delta) \left(\Phi^\top\Phi\right)^{-1} \left (\Phi^\top (\Phi w^* + \epsilon) \right)  \label{apx:eqn:wstarpluserror}
        \\
        &= \underset{N \rightarrow \infty}{\lim} \quad \phi(N+\delta) \left(\Phi^\top \Phi\right)^{-1} \left ( \left(\Phi^\top \Phi \right)w^* + \left(\Phi^\top \epsilon \right) \right)
        \\
        &= \underset{N \rightarrow \infty}{\lim} \quad \phi(N+\delta)w^* + \phi(N+\delta) \left(\Phi^\top \Phi\right)^{-1} \left (\Phi^\top \epsilon \right)
        \\
        &= \underset{N \rightarrow \infty}{\lim} \quad  J(\pi) + \phi(N+\delta) \left(\Phi^\top \Phi\right)^{-1} \left (\Phi^\top \epsilon \right)
        \\
        &=  \underset{N \rightarrow \infty}{\lim} \quad J(\pi) + \phi(N+\delta) \left(\frac{1}{N} \Phi^\top\Phi\right)^{-1} \left (\frac{1}{N} \Phi^\top \epsilon \right)       .
\end{align}
If both $Q^{-1} \coloneqq \left(\underset{N \rightarrow \infty}{\lim}\frac{1}{N} \Phi^\top\Phi\right)^{-1}$ and $\left(\underset{N \rightarrow \infty}{\lim}\frac{1}{N} \Phi^\top \epsilon\right)$ exist, then using Slutsky's Theorem,
\begin{align}
        \underset{N \rightarrow \infty}{\lim} \quad \hat J_{N+\delta}(\pi)  &=  J(\pi) + \phi(N+\delta) Q^{-1} \left (\underset{N \rightarrow \infty}{\lim} \quad \frac{1}{N} \Phi^\top \epsilon \right) \label{apx:eqn:decomperror}.
        \end{align}
To validate  conditions for Slutsky's Theorem, notice that it holds from Grenander's conditions that $Q^{-1}$ exists.
Informally, Grenander's conditions require that no feature degenerates to a sequence of zeros, no feature of a single observation dominates the sum of squares of its series, and the $\Phi^\top\Phi$ matrix always has full rank.
These conditions are easily satisfied for most popular basis functions used to create input features.
For formal definitions of these conditions, we refer the reader to Chpt. 5, \citet{greene2003econometric}. 

In the following, we restrict our focus to the term inside the brackets in the second term of \eqref{apx:eqn:decomperror} and show that it exists, so that \eqref{apx:eqn:decomperror} is valid.
Notice that the mean of that term is,
\begin{align}
    \mathbb{E}\left[ 
        \frac{1}{N}\Phi^\top \epsilon  \right] &=      
        \frac{1}{N} \Phi^\top \mathbb{E}\left[\epsilon  \right] = 0.
\end{align}

Since the mean is $0$, the variance is,
\begin{align}
    \mathbb{V}\left[ 
        \frac{1}{N} \Phi^\top \epsilon  \right] &=      
        \frac{1}{N^2}\mathbb{V}\left[ \Phi^\top \epsilon  \right] 
        = \frac{1}{N^2} \mathbb{E} \left[ \left( \Phi^\top \epsilon \right) \left(\Phi^\top \epsilon \right)^\top \right]
        = \frac{1}{N^2} \left( \Phi^\top \mathbb{E}\left [ \epsilon \epsilon^\top | \Phi\right] \Phi \right).
\end{align}
As each policy has a non-zero probability of taking any action in any state, the variance of PDIS (or the standard IS) estimator is bounded and thus each element of $\mathbb{E}[\epsilon \epsilon^\top|\Phi]$ is bounded.
Further, as $\phi_i(\cdot)$ is bounded, each element of $\Phi$ is also bounded.
Therefore,  
\begin{align}
\underset{N \rightarrow \infty}{\lim} \quad \mathbb{V}\left[ 
        \frac{1}{N} \Phi^\top \epsilon  \right]  \rightarrow 0. \label{apx:eqn:zerovar}
\end{align}
Since the mean is $0$ and the variance asymptotes to $0$, by  Kolmogorov's strong law of large numbers it follows that 
 as $N\rightarrow \infty,\,\, \frac{1}{N} \Phi^\top \epsilon  \overset{\text{a.s.}}{\longrightarrow} 0$. 
Combining this with \eqref{apx:eqn:decomperror}, 
\begin{align}
    \underset{N \rightarrow \infty}{\lim} \,\, \hat J_{N+\delta}(\pi) &\overset{a.s.}{\rightarrow}  J(\pi) + \phi(N+\delta) Q^{-1} 0 = J(\pi).
\end{align}

\end{proof}

\begin{thm}[Consistent NWIS] Under \thref{ass:stationary,ass:independentTraj}, for all $\delta \geq 1$, $\hat J_{k+\delta}^\ddagger(\pi)$ is a consistent estimator of $J(\pi)$. That is, as $N \rightarrow \infty, \, \hat J_{N+\delta}^\ddagger(\pi)  \overset{\text{a.s.}}{\longrightarrow} J(\pi).$
\thlabel{thm:consistentNWIS}
\end{thm}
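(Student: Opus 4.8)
The plan is to reproduce the argument of \thref{thm:consistentNIS}, with the ordinary least-squares normal equations replaced by the weighted ones. Recall $\hat J^\ddagger_{N+\delta}(\pi) = \phi(N+\delta) w^\ddagger$ with $w^\ddagger = (\Phi^\top\Lambda\Phi)^{-1}\Phi^\top\Lambda Y$, where $\Lambda_{ii} = \rho_i^\ddagger$, $Y = [G_1,\dots,G_N]^\top$, and $G_i = \sum_{t=0}^T\gamma^t R_i^t$. Condition (d) guarantees that $\rho_i^\ddagger$ is bounded and strictly positive, so $\Lambda$ is positive definite and, with $\Phi$ of full column rank (condition (c)), $\Phi^\top\Lambda\Phi$ is invertible for every finite $N$. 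Under \thref{ass:stationary}, whole-trajectory importance sampling is unbiased, so $\mathbb{E}[\rho_i^\ddagger G_i] = J(\pi)$ and $\mathbb{E}[\rho_i^\ddagger] = 1$ for all $i$. Exactly as in the NIS proof, set $w^* = [0,\dots,0,J(\pi)]^\top$; condition (a) then gives $\Phi w^* = J(\pi)\mathbbm{1}$ and $\phi(N+\delta)w^* = J(\pi)$. Since $\Phi^\top\Lambda\Phi w^* = \Phi^\top\Lambda(\Phi w^*)$, the normal equations yield
\begin{align}
  w^\ddagger - w^* = \left(\tfrac1N\Phi^\top\Lambda\Phi\right)^{-1}\left(\tfrac1N\Phi^\top\Lambda(Y - \Phi w^*)\right).
\end{align}

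Next I would show that each of the two normalized factors converges almost surely. For the first, $\tfrac1N\Phi^\top\Lambda\Phi = \tfrac1N\sum_{i=1}^N \rho_i^\ddagger\,\phi(i)^\top\phi(i)$ is a sum of independent (\thref{ass:independentTraj}) matrices with means $\phi(i)^\top\phi(i)$ (because $\mathbb{E}[\rho_i^\ddagger]=1$ and $\phi(i)$ is deterministic). Boundedness of $\rho_i^\ddagger$ (condition (d)) and of $\phi$ (condition (b)) makes the per-term variances uniformly bounded, so Kolmogorov's SLLN gives $\tfrac1N\Phi^\top\Lambda\Phi - \tfrac1N\Phi^\top\Phi \overset{\text{a.s.}}{\longrightarrow} 0$; combined with $\tfrac1N\Phi^\top\Phi \to Q$ invertible (Grenander's conditions, as in \thref{thm:consistentNIS}), this gives $\tfrac1N\Phi^\top\Lambda\Phi \overset{\text{a.s.}}{\longrightarrow} Q$. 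For the second factor, $\tfrac1N\Phi^\top\Lambda(Y - \Phi w^*) = \tfrac1N\sum_{i=1}^N \rho_i^\ddagger(G_i - J(\pi))\,\phi(i)^\top$, and each summand has mean $\phi(i)^\top\!\big(\mathbb{E}[\rho_i^\ddagger G_i] - J(\pi)\mathbb{E}[\rho_i^\ddagger]\big) = 0$; since $\rho_i^\ddagger$, $G_i$ (bounded rewards, finite horizon), and $\phi$ are all bounded, the per-term variances are again uniformly bounded and Kolmogorov's SLLN gives $\tfrac1N\Phi^\top\Lambda(Y - \Phi w^*) \overset{\text{a.s.}}{\longrightarrow} 0$. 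Applying Slutsky's theorem to the display above gives $w^\ddagger - w^* \overset{\text{a.s.}}{\longrightarrow} Q^{-1}0 = 0$, and hence, using $|\phi(N+\delta)| < C_1$,
\begin{align}
  \hat J^\ddagger_{N+\delta}(\pi) = \phi(N+\delta)w^* + \phi(N+\delta)(w^\ddagger - w^*) \overset{\text{a.s.}}{\longrightarrow} J(\pi).
\end{align}

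The main obstacle is the same one that appears in \thref{thm:consistentNIS}: verifying the hypotheses of the strong law for these sums of non-identically distributed terms, namely that the whole-trajectory importance ratios have uniformly bounded variance (which is exactly what conditions (b) and (d) provide) and that $\tfrac1N\Phi^\top\Phi$ has an invertible limit (Grenander's conditions, guaranteed for the standard bases by condition (c)). The only genuinely new point relative to the NIS case is noticing that the \emph{weighted} residual $\rho_i^\ddagger(G_i - J(\pi))$ has zero mean even though $G_i$ is not itself an unbiased estimate of $J(\pi)$; this is the non-stationary analog of the reason the WIS estimator, although biased, is consistent, and it is what makes the weighted normal equations collapse to $w^*$ in the limit.
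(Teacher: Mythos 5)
Your proposal is correct and follows essentially the same route as the paper's proof: both reduce the claim to showing $\tfrac1N\Phi^\top\Lambda\Phi \overset{\text{a.s.}}{\longrightarrow} Q$ via Kolmogorov's strong law together with $\mathbb{E}[\rho_i^\ddagger]=1$, showing that a mean-zero weighted residual term vanishes almost surely, and then invoking Slutsky's theorem with $w^*=[0,\dots,0,J(\pi)]^\top$. The only cosmetic difference is the form of the residual: the paper substitutes $\Lambda Y = \Phi w^* + \epsilon$ so its error terms are $\rho_i^\ddagger G_i - J(\pi)$, whereas you work with $\rho_i^\ddagger\bigl(G_i - J(\pi)\bigr)$; both are mean zero for the same reason and lead to the identical conclusion.
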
 

\begin{proof}
Recall from \eqref{eqn:weightedLS} that
    \begin{align}
        \hat J_{N+\delta}^\ddagger(\pi) &= \phi(N+\delta)w^\ddagger = \phi(N+\delta) (\Phi^\top \Lambda \Phi)^{-1} \Phi^\top \Lambda Y \label{apx:eqn:NWISrecast}.
    \end{align}
Consistency of $\hat J_{N+\delta}^\ddagger(\pi)$ can be proven similarly to the proof of \thref{thm:consistentNIS}.
Note that here $Y =  [G_0, ..., G_k]^\top $ contains the returns for each episode, and $\Lambda Y$ denotes the unbiased estimates for $J(\pi)$.
Therefore, similar to \eqref{apx:eqn:wstarpluserror},
\begin{align}
     \underset{N \rightarrow \infty}{\lim} \quad \hat J_{N+\delta}^\ddagger(\pi) &=  \underset{N \rightarrow \infty}{\lim} \quad \phi(N+\delta) (\Phi^\top \Lambda \Phi)^{-1} (\Phi^\top (\Phi w^* + \epsilon))
     \\
     &=\underset{N \rightarrow \infty}{\lim}\quad \phi(N+\delta) (\Phi^\top \Lambda \Phi)^{-1} ((\Phi^\top \Phi) w^* + \Phi^\top \epsilon)
     \\
     &=\underset{N \rightarrow \infty}{\lim}\quad \phi(N+\delta) \left(\frac{1}{N} \Phi^\top \Lambda \Phi \right)^{-1} \left (\left(\frac{1}{N}\Phi^\top \Phi \right) w^* + \frac{1}{N} \Phi^\top \epsilon\right). \label{apx:eqn:WLSlim}
\end{align}
In the following, we will make use of Slutsky's Theorem.
To do so, we first restrict our focus to the terms in the first bracket in \eqref{apx:eqn:WLSlim}, and show existence of its limit.
Let $\tilde \rho_k \coloneqq \rho^\ddagger_k - \mathbb{E}[\rho^\ddagger_k]$ be a mean $0$ random variable, then
\begin{align}
    \underset{N \rightarrow \infty}{\lim}\quad\frac{1}{N} \Phi^\top \Lambda \Phi  &= \underset{N \rightarrow \infty}{\lim}\quad\frac{1}{N} \sum_{k=1}^{N}\rho^\ddagger_k \phi(k)^\top \phi(k). 
    \\
    &= \underset{N \rightarrow \infty}{\lim}\quad \left(\frac{1}{N} \sum_{k=1}^N \tilde \rho_k \phi(k)^\top \phi(k) +   \frac{1}{N} \sum_{k=1}^N \mathbb{E}\left[\rho_k^\ddagger\right] \phi(k)^\top \phi(k) \right). 
    \\
    &\overset{(a)}{\rightarrow}   \underset{N \rightarrow \infty}{\lim}\quad \frac{1}{N}  \sum_{k=1}^{N} \mathbb{E}\left[\rho^\ddagger_k \right]  \phi(k)^\top \phi(k)  
    \\
    &\overset{(b)}{=}   \underset{N \rightarrow \infty}{\lim}\quad \frac{1}{N}  \sum_{k=1}^{N} \phi(k)^\top \phi(k)  
    \\
    %
    %
    &=   \underset{N \rightarrow \infty}{\lim}\quad\frac{1}{N} \Phi^\top \Phi , \label{apx:eqn:rhoeliminated}
\end{align}
where (a) follows from the Kolmogorov's strong law of large numbers.
To see this, let $Z_k = \tilde \rho_k\phi(k)^\top \phi(k)$.
Notice that $\mathbb{E}[Z_k] = \mathbb{E}[\tilde \rho_k]\phi(k)^\top \phi(k) = 0$, and as
both $\tilde \rho$ and $\phi(\cdot)$ are bounded, the variance of $Z_k$ is also bounded.
Therefore, $(1/N) \sum \tilde \rho_k \phi(k)^\top \phi(k) \rightarrow 0 $ almost surely as $N \rightarrow \infty$.
%
%
Consequently, (b) is obtained using the fact that the expected value of importance ratios is $1$ \citep[Lemma 3]{thomas2015safe}.
Notice that \eqref{apx:eqn:rhoeliminated} reduced to $Q$ (which was defined in the proof of \thref{thm:consistentNIS}) and we know that its limit exists because $\phi(\cdot)$ is bounded. 
Further, we also know that $Q^{-1}$ and $\left(\underset{N \rightarrow \infty}{\lim}\frac{1}{N} \Phi^\top \epsilon\right)$ exist (see the proof of \thref{thm:consistentNIS}).
Therefore, using Slutsky's Theorem and substituting  \eqref{apx:eqn:rhoeliminated} in \eqref{apx:eqn:WLSlim},

%
\begin{align}
    \underset{N \rightarrow \infty}{\lim} \quad \hat J_{N+\delta}^\ddagger(\pi)
     &= \phi(N+\delta) \left(\underset{N \rightarrow \infty}{\lim}\quad\frac{1}{N} \Phi^\top \Phi \right)^{-1} \left ( \left(\underset{N \rightarrow \infty}{\lim}\quad\frac{1}{N}\Phi^\top \Phi \right) w^* + \underset{N \rightarrow \infty}{\lim}\quad \frac{1}{N} \Phi^\top \epsilon\right)
     \\
     &= \phi(N+\delta) w^* + \phi(N+\delta)\left(\underset{N \rightarrow \infty}{\lim}\quad\frac{1}{N} \Phi^\top \Phi \right)^{-1}\left( \underset{N \rightarrow \infty}{\lim}\quad \frac{1}{N} \Phi^\top \epsilon\right)
     \\
     &= J(\pi) + \phi(N+\delta)\left(\underset{N \rightarrow \infty}{\lim}\quad\frac{1}{N} \Phi^\top \Phi \right)^{-1}\left( \underset{N \rightarrow \infty}{\lim}\quad \frac{1}{N} \Phi^\top \epsilon\right)
     \\
    & \overset{\text{a.s.}}{\longrightarrow} J(\pi) \label{apx:eqn:WLSfinal},    
\end{align}
where \eqref{apx:eqn:WLSfinal} follows from the simplification used for \eqref{apx:eqn:decomperror} in the proof of \thref{thm:consistentNIS}. 
\end{proof}

\section{Gradient of PDIS  Estimator}

\label{apx:sec:gradients}

Recall that the NIS and NWIS estimators build upon  estimators of past performances by using them along with OLS and WLS regression, respectively.
Consequently, gradients of the NIS and NWIS estimators with respect to the policy parameters can be decomposed into terms that consist of gradients of the estimates of past performances with respect to the policy parameters.
Here we provide complete derivations for obtaining a straightforward equation for computing the gradients of the PDIS estimator with respect to the policy parameters.
These might also be of independent interest when dealing with off-policy policy optimization for stationary MDPs.

\begin{prop}[PDIS Gradient]
Let $\rho_i(0,l) \coloneqq \prod_{j=0}^{l} \frac{\pi^\theta(A_i^j|S_i^j)}{\beta_i(A_i^j|S_i^j)}$, then
\begin{align}
    \nabla \hat J_i(\theta) &= \sum_{t=0}^{T}  \frac{  \partial \log  \pi^\theta(A^t_i|S^t_i)}{\partial \theta}  \left( \sum_{l=t}^{T} \rho_i(0, l)  \gamma^l  R^l_i  \right) .
\end{align}
\end{prop}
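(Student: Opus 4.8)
The plan is to start from the definition of the PDIS estimator $\hat J_i(\theta)$ given in \eqref{eqn:ope} and differentiate it term by term with respect to $\theta$. Recall
\[
\hat J_i(\theta) = \sum_{t=0}^{T} \rho_i(0,t)\, \gamma^t R_i^t,
\qquad
\rho_i(0,t) = \prod_{l=0}^{t} \frac{\pi^\theta(A_i^l|S_i^l)}{\beta_i(A_i^l|S_i^l)}.
\]
Only the product $\rho_i(0,t)$ depends on $\theta$ (the rewards $R_i^t$, the discount $\gamma^t$, and the behavior probabilities $\beta_i$ are all fixed given the observed trajectory $H_i$), so $\nabla \hat J_i(\theta) = \sum_{t=0}^{T} \gamma^t R_i^t\, \nabla \rho_i(0,t)$.

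Next I would compute $\nabla \rho_i(0,t)$ using the standard log-derivative ("likelihood ratio") trick: since $\rho_i(0,t)$ is a product of $t+1$ factors, $\nabla \log \rho_i(0,t) = \sum_{l=0}^{t} \nabla \log \pi^\theta(A_i^l|S_i^l)$ (the $\beta_i$ terms vanish under the log-derivative because they do not depend on $\theta$), and therefore
\[
\nabla \rho_i(0,t) = \rho_i(0,t) \sum_{l=0}^{t} \frac{\partial \log \pi^\theta(A_i^l|S_i^l)}{\partial \theta}.
\]
Substituting back gives
\[
\nabla \hat J_i(\theta) = \sum_{t=0}^{T} \gamma^t R_i^t\, \rho_i(0,t) \sum_{l=0}^{t} \frac{\partial \log \pi^\theta(A_i^l|S_i^l)}{\partial \theta}.
\]

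The final step is a change in the order of summation to match the claimed form. The double sum ranges over pairs $(t,l)$ with $0 \le l \le t \le T$; re-indexing so that the outer sum is over $l$ (renamed $t$ in the statement) and the inner sum over the trajectory positions $\ge l$ (renamed $l$ in the statement) yields
\[
\nabla \hat J_i(\theta) = \sum_{t=0}^{T} \frac{\partial \log \pi^\theta(A_i^t|S_i^t)}{\partial \theta} \left( \sum_{l=t}^{T} \rho_i(0,l)\, \gamma^l R_i^l \right),
\]
which is exactly \eqref{eqn:opg3}. I do not expect any real obstacle here; the only things to be careful about are (i) confirming that nothing other than $\rho_i(0,t)$ carries $\theta$-dependence, and (ii) getting the index bookkeeping right in the swap of summation order, since the paper reuses the symbols $t$ and $l$ with different roles than in the intermediate expression. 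Differentiability of $\log \pi^\theta$ in $\theta$ is assumed implicitly (as is standard for policy-gradient methods and as guaranteed by the policy parameterizations discussed later in the appendix).
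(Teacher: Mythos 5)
Your proposal is correct and follows essentially the same route as the paper's proof: differentiate the PDIS sum term by term, apply the log-derivative trick to $\rho_i(0,t)$ (noting the $\beta_i$ factors carry no $\theta$-dependence), and then swap the order of the double summation over $0 \le l \le t \le T$ to obtain the stated form. The paper merely adds a table visualizing the re-indexing step as a column-wise versus row-wise sum, but the argument is identical.
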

\begin{proof} 
Recall from \eqref{eqn:ope} that,
\begin{align}
    \hat J_i(\theta) &= \sum_{t=0}^{T}\left(\prod_{l=0}^{t} \frac{\pi^\theta(A_i^l|S_i^l)}{\beta(A_i^l|S_i^l)}\right) \gamma^t R_i^t .
    \end{align}
    Computing the gradient of $\hat J_i(\theta)$,
    \begin{align}
    \nabla \hat J_i(\theta) &= \sum_{t=0}^{T} \frac{\partial}{\partial \theta} \left(\prod_{l=0}^{t} \frac{\pi^\theta(A_i^l|S_i^l)}{\beta(A_i^l|S_i^l)}\right) \gamma^t R_i^t\\
     &=\sum_{t=0}^{T} \left(\prod_{l=0}^{t} \frac{\pi^\theta(A_i^l|S_i^l)}{\beta(A_i^l|S_i^l)}\right) \frac{\partial \log \left(\prod_{l=0}^{t} \pi^\theta(A_i^l|S_i^l) \right)}{\partial \theta} \gamma^t R_i^t  \\
     &=\sum_{t=0}^{T} \left(\prod_{l=0}^{t} \frac{\pi^\theta(A_i^l|S_i^l)}{\beta(A_i^l|S_i^l)}\right) \left( \sum_{l=0}^{t} \frac{  \partial \log  \pi^\theta(A_i^l|S_i^l)}{\partial \theta} \right)\gamma^t R_i^t  \\
     &= \sum_{t=0}^{T} \rho_i(0, t) \left( \sum_{l=0}^{t} \frac{  \partial \log  \pi^\theta(A_i^l|S_i^l)}{\partial \theta} \right)\gamma^t R_i^t . \label{eqn:opg1} \\
     &= \sum_{t=0}^{T}  \frac{  \partial \log  \pi^\theta(A_i^t|S_i^t)}{\partial \theta}  \left( \sum_{l=t}^{T} \rho_i(0, l)  \gamma^l  R_i^l  \right)  \label{eqn:opg2},
\end{align}

\begin{table}[t]
\begin{center}
\begin{tabular}{l|lllll}
 ${}_t$ \textbackslash ${}^l$ &  0 &  1  &  2 & ... &  T \\
 \hline
 0 & $\gamma^0 \rho_i(0, 0) \Psi_i^0 R_i^0$  & \cellcolor{lightgray} & \cellcolor{lightgray} & \cellcolor{lightgray} &\cellcolor{lightgray}  \\
 1 & $\gamma^1 \rho_i(0, 1) \Psi_i^0 R_i^1$ & $\gamma^1 \rho_i(0, 1) \Psi_i^1 R_i^1$ & \cellcolor{lightgray} & \cellcolor{lightgray} & \cellcolor{lightgray}\\
 2 & $\gamma^2 \rho_i(0, 2) \Psi_i^0 R_i^2$ & $\gamma^2 \rho_i(0, 2) \Psi_i^1 R_i^2$ & $\gamma^2 \rho_i(0, 2) \Psi_i^2 R_i^2$ & \cellcolor{lightgray} & \cellcolor{lightgray}\\
 \vdots & \vdots & \vdots & \vdots & \vdots & \cellcolor{lightgray}\vdots \\
 T & $\gamma^T \rho_i(0, T) \Psi_i^0 R_i^T$ & $\gamma^T \rho_i(0, T) \Psi_i^1 R_i^T$ & $\gamma^T \rho_i(0, T) \Psi_i^2 R_i^T$ & ... & $\gamma^T \rho_i(0, T) \Psi_i^T R_i^T$ 
\end{tabular}
\end{center}
\caption{let $\Psi_i^t = \partial \log  \pi^\theta(A_i^t|S_i^t)/ \partial \theta$. This table represents all the terms in \eqref{eqn:opg2} required for computing $\nabla \hat J_i(\theta)$. Gray color denotes empty cells.}
\label{tab:offpg}
\end{table}
%
%
where, in the last step, instead of the summation over the partial derivatives of $\log \pi^\theta$ for each weight $\rho(\cdot, \cdot)$, we consider the alternate form where the summation is over the importance weights $\rho (\cdot, \cdot)$ for each partial derivative of $\log \pi^\theta$. 
To see this step clearly, let $\Psi_i^t = \partial \log  \pi^\theta(A_i^t|S_i^t)/ \partial \theta$, then Table \ref{tab:offpg} shows all the terms in  \eqref{eqn:opg2}.
The last step above  corresponds to taking the column-wise sum instead of the row-wise sum in Table \ref{tab:offpg}.
\end{proof}

\section{Detailed Literature Review}
\label{apx:sec:litreview}
%

%
The problem of non-stationarity has a long history.
In the operations research community, many dynamic sequential decision-making problems are modeled using infinite horizon \textit{non-homogeneous} MDPs \citep{hopp1987new}.
While estimating an optimal policy is infeasible under an infinite horizon setting when the dynamics are changing and a stationary distribution cannot be reached, several researchers have studied the problem of  identifying sufficient forecast horizons for performing near-optimal planning \citep{garcia2000solving,cheevaprawatdomrong2007solution,ghate2013linear} or robust policy iteration \citep{sinha2016policy}.
%

%
%
In contrast, non-stationary multi-armed bandits (NMAB) capture the setting where the horizon length is one, but the reward distribution changes over time \citep{moulines2008,besbes2014stochastic}.
%
%
%
Many variants of NMAB, like \textit{cascading non-stationary bandits} \citep{wang2019aware,li2019cascading} and  \textit{rotten bandits} \citep{levine2017rotting,seznec2018rotting} have also been considered.
In optimistic online convex optimization, researchers have shown that better performance can be achieved by updating the parameters using predictions (which are based on the past gradients) of the gradient of the future loss  \citep{rakhlin2013online, yang2016optimistic, mohri2016accelerating,wang2019optimistic}. 
%
%
%

Non-stationarity also occurs in multiplayer games, like rock-paper-scissors, where each episode is a single one-step interaction \citep{singh2000nash,bowling2005convergence,conitzer2007awesome}.
Opponent modeling in games has been shown to be useful and regret bounds for multi-player games where players can be replaced with some probability $p$, i.e., the game changes slowly over time, have also been established \citep{zhang2010multi,mealing2013opponent,foster2016learning, foerster2018learning}.
%
%
%
However, learning sequential strategies in a non-stationary setting is still an open research problem.
%

%
%

%
For episodic non-stationary MDPs, researchers have also looked at providing regret bounds for algorithms that exploit oracle access to the current reward and transition functions \citep{even2005experts,yu2009online,abbasi2013online,lecarpentier2019non,ying2019}.
%
%
Alleviating oracle access by performing a count-based estimate of the reward and transition functions based on the recent history of interactions has also been proposed \citep{gajane2018sliding,cheung2019reinforcement}.
For tabular MDPs, past data from a non-stationary MDP can be used to construct a maximum-likelihood estimate model \citep{ornik2019learning} or a full Bayesian model \citep{jong2005bayesian} of the transition dynamics.
%
%
Our focus is on the setting which is not restricted to tabular representations.  

%
A Hidden-Mode MDP is an alternate setting that assumes that the environment changes are confined to a small number of hidden modes, where each mode represents a unique MDP.  
This provides a more tractable way to model a limited number of MDPs 
\citep{choi2000environment,basso2009reinforcement,mahmud2013learning},
or perform model-free updates using mode-change detection  \citep{padakandla2019reinforcement}.
%
%
In this work, we are interested in the continuously changing setting, where the number of possible MDPs is unbounded.
%

%
%
Tracking has also been shown to play an important role in non-stationary domains.
\citet{thomas2017predictive} and \citet{jagerman2019when} have proposed policy evaluation techniques in a non-stationary setting by tracking a policy's past performances.
However, they do not provide any procedure for searching for a good future policy.
To adapt quickly in non-stationary tasks, TIDBD \citep{kearney2018tidbd} and AdaGain \citep{jacobsen2019meta} perform TD-learning while also automatically (de-)emphasizing updates to (ir)relevant features by modulating the learning rate of the parameters associated with the respective features.
Similarly, \citet{abdallah2016addressing} propose repeating a Q-value update inversely proportional to the probability with which an action was chosen to obtain a transition tuple. 
%
%
%
In this work, we go beyond tracking and  proactively optimize for the future.

%
%

\section{Implementation details}
\label{apx:sec:implementation}

\subsection{Environments}
\label{apx:sub:env}

We provide empirical results on three non-stationary environments: diabetes treatment, recommender system, and a goal-reacher task.
Details for each of these environments are provided in this section.

\textbf{Non-stationary Diabetes Treatment:}
This MDP models the problem of Type-1 diabetes management.
A person suffering from Type-1 diabetes does not produce enough \textit{insulin}, a hormone that promotes absorption of glucose from the blood.
%
%
%
Consumption of a meal increases the blood-glucose level in the body, and if the blood-glucose level becomes too high, then the patient can suffer from \textit{hyperglycemia}.
Insulin injections can  reduce the blood-glucose level, but if the level becomes too low, then the patient suffers from \textit{hypoglycemia}.
While either of the extremes is undesirable, hypoglycemia is more dangerous and can triple the five-year mortality rate for a person with diabetes \citep{man2014uva}.

Autonomous medical support systems have been proposed to decide how much insulin should be injected to keep a person's blood glucose levels near ideal levels \citep{bastani2014model}. 
Currently, the parameters of such a medical support system are set by a doctor specifically for each patient.
However, due to non-stationarities induced over time as a consequence of changes in the body mass index, the insulin sensitivity of the pancreas, diet, etc., the parameters of the controller need to be readjusted regularly.
Currently, this requires revisiting the doctor. 
A viable reinforcement learning solution to this non-stationary problem could enable the automatic tuning of these parameters for patients who lack 
regular access to a physician.

To model this MDP, we use an open-source implementation \citep{simglucose} of the U.S. Food and Drug Administration (FDA) approved Type-1 Diabetes Mellitus simulator (T1DMS) \citep{man2014uva} for treatment of Type-1 diabetes,
where we induce non-stationarity by oscillating the body parameters between two known configurations. 
Each episode consists of a day ($1440$ timesteps, where each timestep corresponds to a minute) in an \textit{in-silico} patient's life and the transition dynamics of a patient's body for each second is governed by a continuous time ordinary differential equation (ODE) \citep{man2014uva}.
After each minute the insulin controller is used to inject the desired amount of insulin for controlling the blood glucose.

For controlling the insulin injection, we use a parameterized policy based on the amount of insulin that a person with diabetes is instructed to inject prior to eating a meal \citep{bastani2014model}:
\begin{align}
    \text{injection} = \frac{\text{current blood glucose} - \text{target blood glucose}}{CF} + \frac{\text{meal size}}{CR},
\end{align}
where `current blood glucose' is the estimate of the person's current blood-glucose level, `target blood glucose' is the desired blood glucose, `meal size' is the estimate of the size of the meal the patient is about to eat, and $CR$ and $CF$ are two real-valued parameters, that must be tuned based on the body parameters to make the treatment effective.
%
%

\textbf{Non-stationary Recommender System:}
During online recommendation of movies, tutorials, advertisements and other products, a recommender system needs to interact and personalize for each user.
However, interests of a user for different items, among the products that can be recommended, fluctuate over time.
For example, interests during online shopping can vary based on seasonality or other unknown factors.

This environment models the desired recommender system setting where reward (interest of the user) associated with each item changes over time.
Figure \ref{apx:fig:recoresults} (left) shows how the reward associated with each item changes over time, for each of the considered `speeds' of non-stationarity. 
The goal for the reinforcement learning agent is to maximize revenue by recommending the item which the user is most interested in at any time.

\textbf{Non-stationary Goal Reacher:}
For an autonomous robot dealing with tasks in the open-world, it is natural for the problem specification to change over time.
An ideal system should quickly adapt to the changes and still complete the task. 

To model the above setting, this environment considers a task of reaching a non-stationary goal position. 
That is, the location of the goal position keeps slowly moving around with time.
The goal of the reinforcement learning agent is to control the four (left, right, up, and down) actions to move the agent towards the goal as quickly as possible given the real valued Cartesian coordinates of the agent's current location.
The maximum time given to the agent to reach the goal is 15 steps.
%






\subsection{Hyper-parameters}
\label{apx:sub:hyper}

For both the variants of the proposed \textit{Prognosticator} algorithms, we use the Fourier basis to encode the time index while performing (ordinary/weighted) least squares estimation.
%
%
%
Since the Fourier basis requires inputs to be normalized with $|x| \leq 1$, we normalize each time index by dividing it by $K + \delta$, where $K$ is the current time and $\delta$ is the maximum time into the future that we will forecast for.
Further, as we are regressing only on time (which are all positive values), is does not matter whether the function for the policy performance over time is odd ($\Psi(x) = - \Psi(-x)$) or not.
Therefore, we drop all the terms in the basis corresponding to $\sin(\cdot)$, which are useful for modeling odd functions.
This reduces the number of parameters to be estimated by half.
Finally, instead of letting $n \in \mathbb{N}$, we restrict it to a finite set $\{1, ..., d-1\}$, where $d$ is a fixed constant that determines the size of the feature vector for each input.
In all our experiments, $d$ was a hyper-parameter chosen from $\{3, 5, 7\}$.

Other hyper-parameter ranges were common for all the algorithms.
The discounting factor $\gamma$ was kept fixed to $0.99$ and learning rate $\eta$ was chosen from the range $[5 \times 10^{-5}, 5\times 10^{-2}]$.
The entropy regularizer $\lambda$ was chosen from the range $[0, 1 \times 10^{-2}]$.
The batch size $\delta$ was chosen from the set $\{1, 3, 5\}$.
Inner optimization over past data for the proposed methods and FTRL-PG was run for $\{10, 20, 30\} \times \delta$ iterations.
Inner optimization for ONPG corresponds to one iteration over all the trajectories collected in the current batch.
Past algorithms have shown that clipping the importance weights can improve stability of reinforcement learning algorithms \citep{schulman2017proximal}.
Similarly, we clip the maximum value of the importance ratio to a value chosen from $\{5, 10, 15\}$.
%
%
%
As the non-stationary diabetes treatment problem has a continuous action space, the policy was parameterized with a Gaussian distribution having a variance chosen from $[0.5, 2.5]$.
For the non-stationary goal-reacher environment, the policy was parameterized using a two-layer neural network with number of hidden nodes chosen from $\{16, 32, 64\}$. 

In total, $2000$ settings for each algorithm, for each domain, were uniformly sampled (loguniformly for learning rates and $\lambda$) from the mentioned hyper-parameter ranges/sets.
Results from the best performing settings are reported in all plots.
Each hyper-parameter setting was run using $10$ seeds for the non-stationary diabetes treatment (as it was time intensive to run a continuous time ODE for each step) and $30$ seeds for the other two environments to get the standard error of the mean performances.
The authors had shared access to a computing cluster, consisting of 50 compute nodes with 28 cores each, which was used to run all the experiments.\footnote{
Code for our algorithm can be accessed using the following link: \href{https://github.com/yashchandak/OptFuture_NSMDP}{https://github.com/yashchandak/OptFuture\_NSMDP}.}

\section{Detailed Empirical Results}
\label{apx:sec:results}

\paragraph{Complexity analysis (space, time, and sample size)} %
The space requirement for our algorithms and FTRL-PG is linear in the number of episodes seen in the past, whereas it is constant for ONPG as it discards all the past data.
The computational cost of our algorithm is also similar to FTRL-PG as the only additional cost is that of differentiating through least-squares estimators which involves computing $(\Phi^\top \Phi)^{-1}$ or $(\Phi^\top\Lambda \Phi)^{-1}$. 
This additional overhead is negligible as these matrices are of the size $d \times d$, where $d$ is the size of the feature vector for time index and $d << N$, where $N$ is the number of past episodes.
Figures \ref{fig:results}, \ref{apx:fig:recoresults}, and \ref{apx:fig:detailedresults} present an empirical estimate for the sample efficiency.

\paragraph{Ablation study:} In Figure \ref{Fig:ablation}, we show the impact of basis function,  $\phi(\cdot)$, on the performance of both of our proposed algorithms: Pro-OLS and Pro-WLS.
Dimension $d$ for both the Fourier basis and the polynomial basis was chosen from $\{3,5,7\}$.
All other hyper-parameters were searched as  described in Section \ref{apx:sub:hyper}.
It can be seen that both the Fourier and polynomial basis functions provide sufficient flexibility for modeling the trend, whereas linear basis offers limited flexibility and results in poor performance.

\begin{figure}
    \centering
    \includegraphics[width=0.3\textwidth]{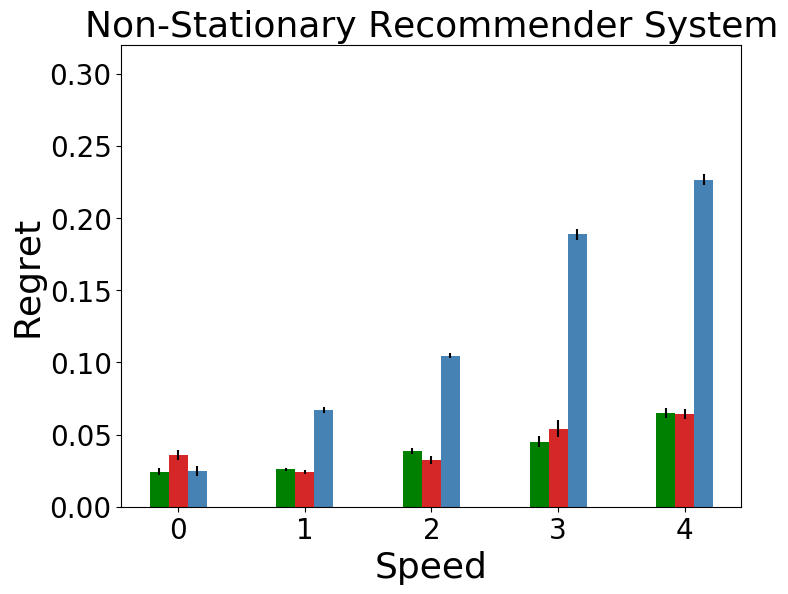}
    \hspace{18pt}
    \includegraphics[width=0.3\textwidth]{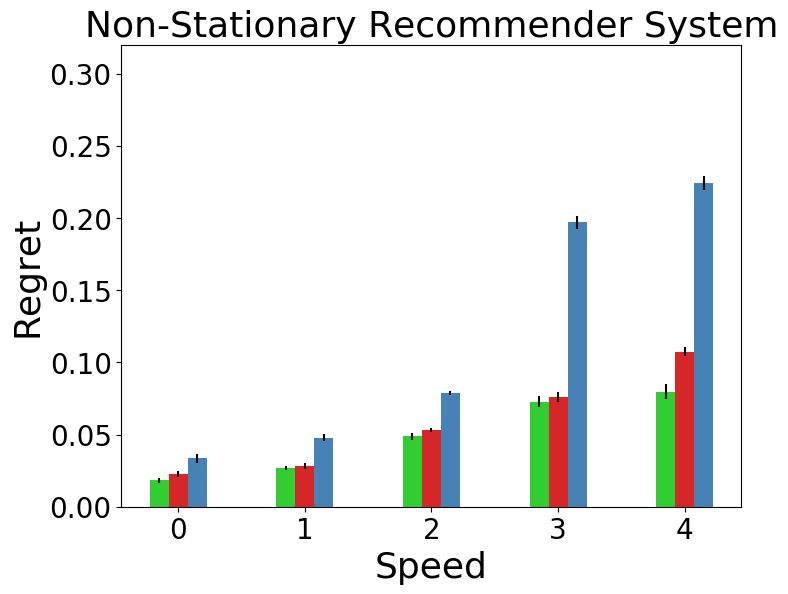}
    \\
    \includegraphics[width=0.3\textwidth]{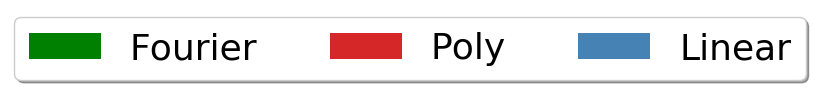}
    \hspace{20pt}
    \includegraphics[width=0.3\textwidth]{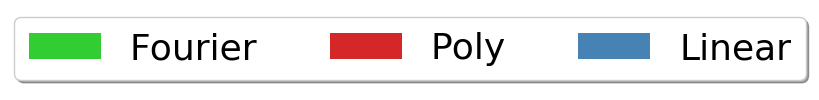}
    \caption{   Best performances of all the algorithms for the non-stationary recommender system environment, obtained by conducting a hyper-parameter sweep over $1000$ hyper-parameter combinations per algorithm.
    For each hyper-parameter setting, $30$ trials were executed.
    Error bars correspond to the standard error.
    (Left) Performance of Pro-OLS with Fourier, polynomial, and linear basis functions. (Right) Performance of Pro-WLS with Fourier, polynomial, and linear basis functions. 
    }
    %
    \label{Fig:ablation}
\end{figure}

\paragraph{Performance over time:} 
In Figure \ref{fig:results}, summary statistics of the results were presented.
In this section we present all the results in detail.
Figure \ref{apx:fig:recoresults} shows the performances of all the algorithms for individual episodes as the user interests changes over time in the recommender system environment.
In this environment, as the true reward for each of the items is directly available, we provide a visual plot for it as well in Figure \ref{apx:fig:recoresults} (left).
Notice that the shape of the performance curve for the proposed methods  closely resembles the trend of the maximum reward attainable across time.

Figure \ref{apx:fig:detailedresults} shows the performances of all the algorithms for the non-stationary goal-reacher and the diabetes treatment environments.
In these environments, the maximum achievable performance for each episode is not readily available. 

\begin{figure}
    \centering
    \includegraphics[width=0.3\textwidth]{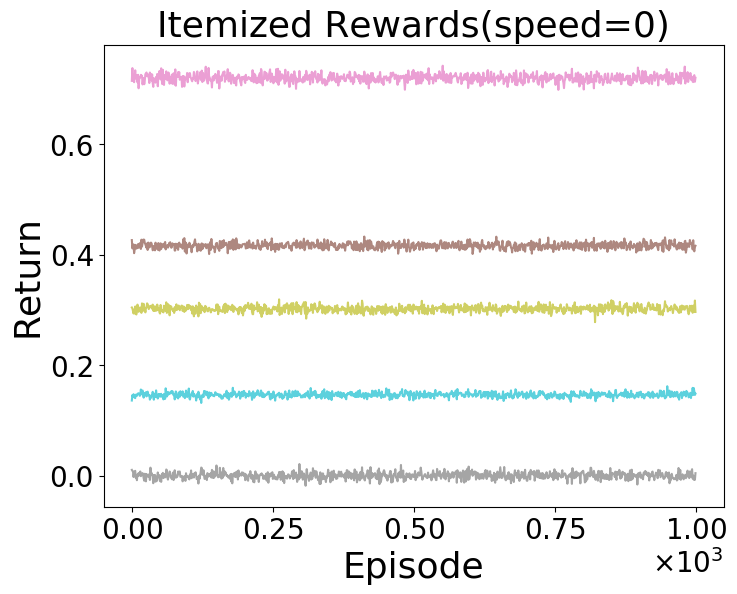} \quad\quad\quad\quad
    \includegraphics[width=0.3\textwidth]{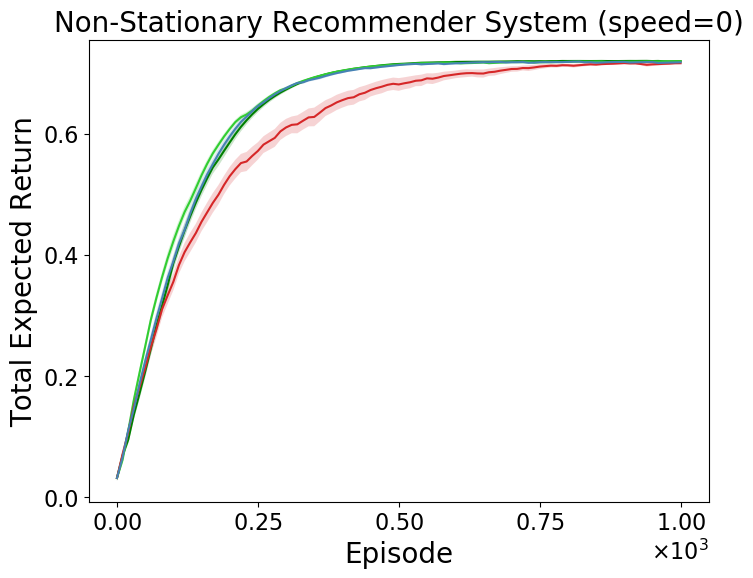}
    \\
    \includegraphics[width=0.3\textwidth]{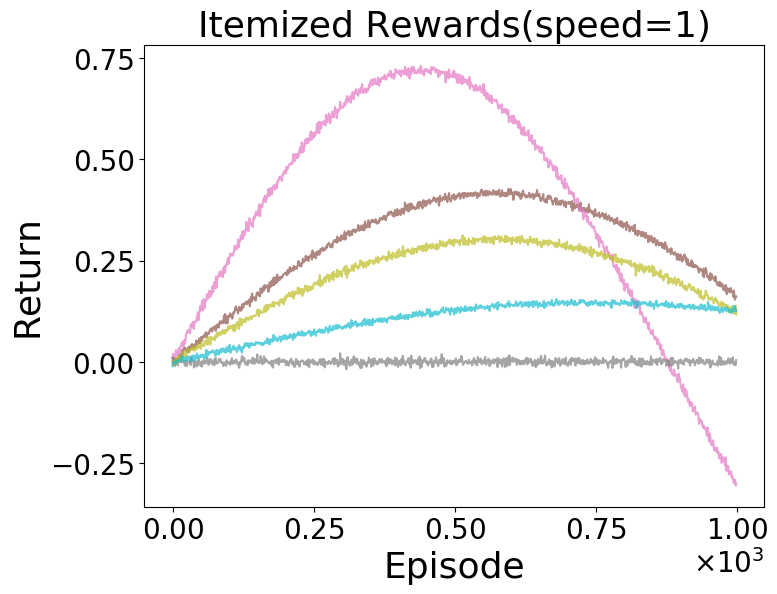} \quad\quad\quad\quad
    \includegraphics[width=0.3\textwidth]{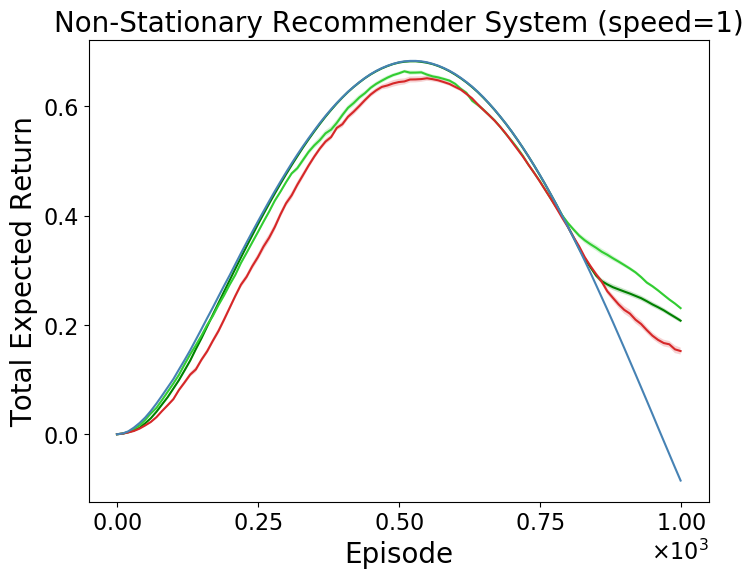}\\
    \includegraphics[width=0.3\textwidth]{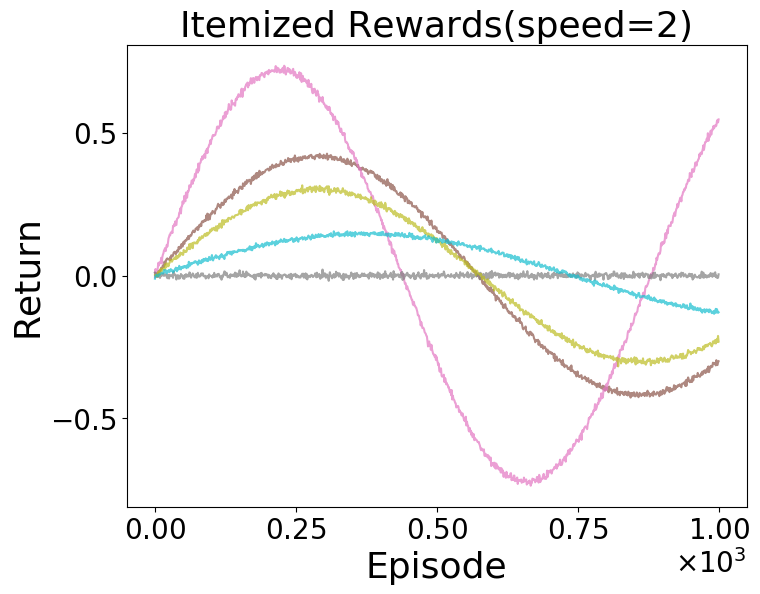} \quad\quad\quad\quad
    \includegraphics[width=0.3\textwidth]{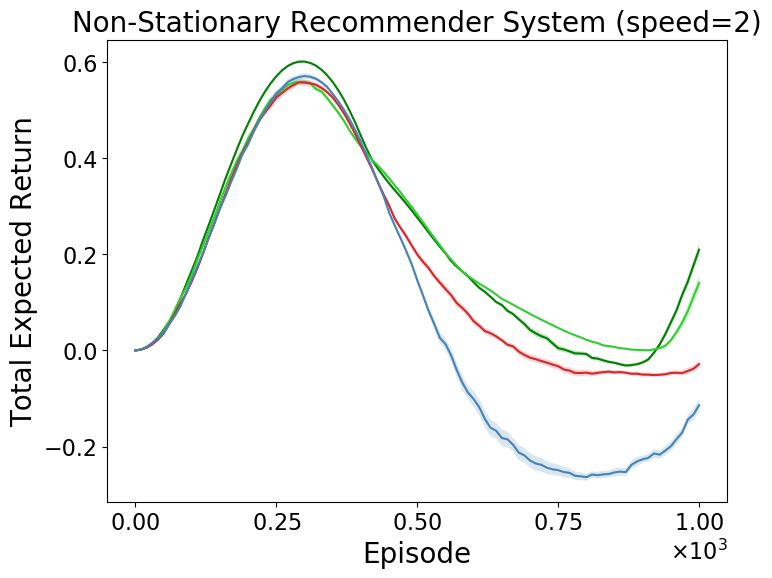}\\
    \includegraphics[width=0.3\textwidth]{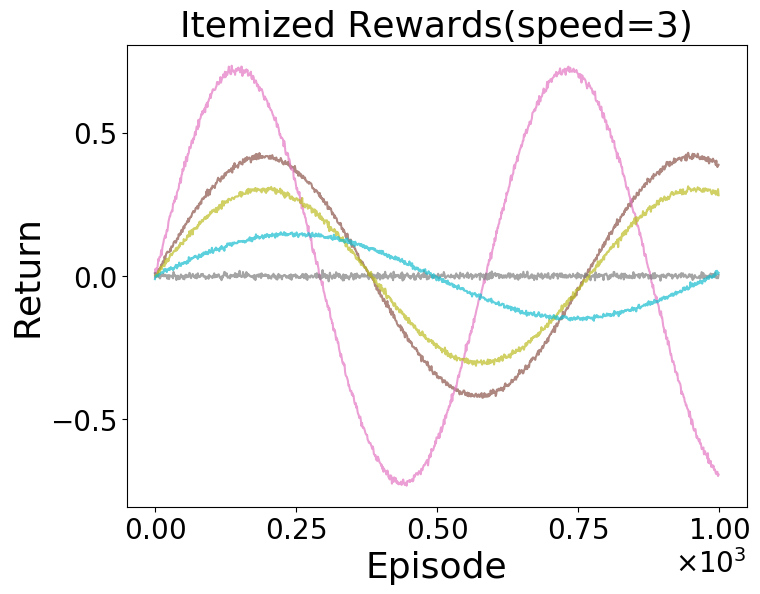} \quad\quad\quad\quad
    \includegraphics[width=0.3\textwidth]{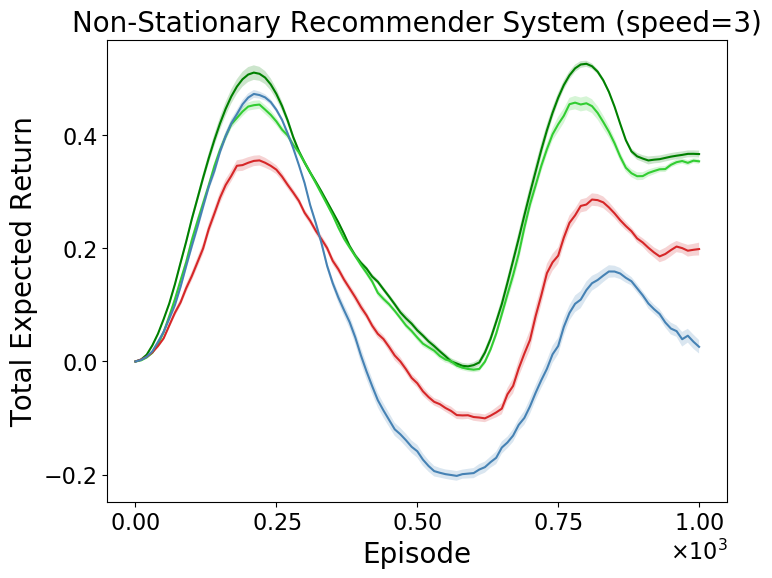}\\
    \includegraphics[width=0.3\textwidth]{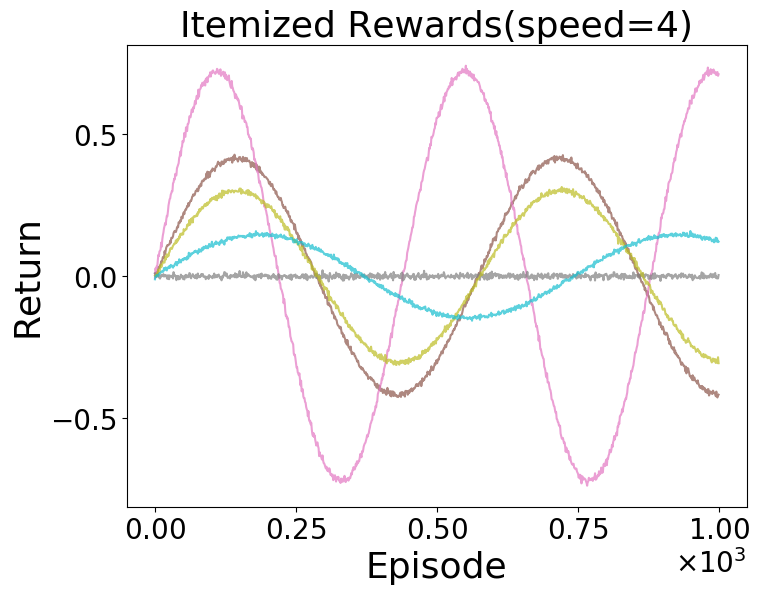} \quad\quad\quad\quad
    \includegraphics[width=0.3\textwidth]{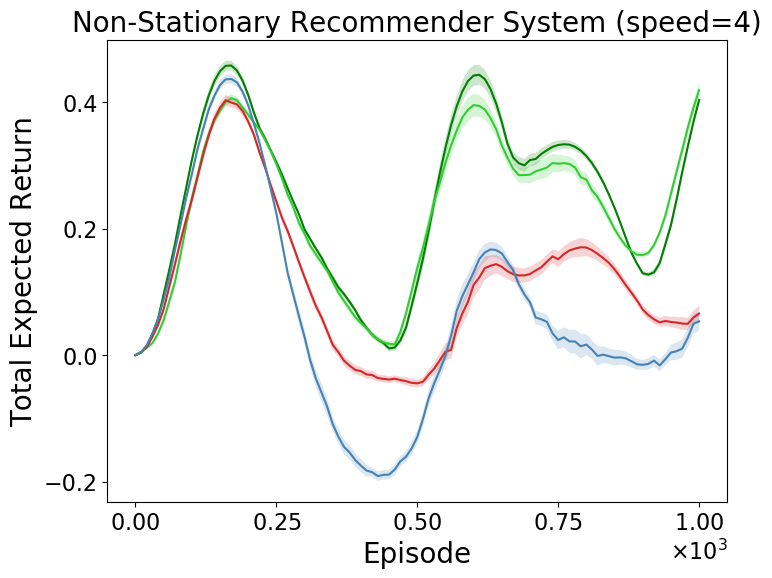}\\
    \quad\quad\quad\quad\quad\quad
    \includegraphics[width=0.35\textwidth]{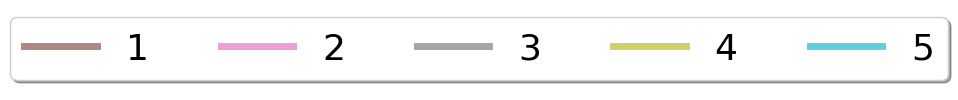} \quad\quad
    \includegraphics[width=0.4\textwidth]{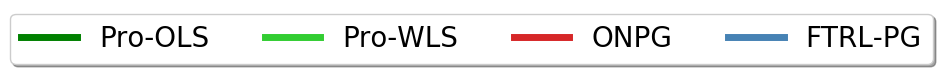}
    \caption{(Left) Fluctuations in the reward associated with each of the 5 items that can be recommended, for different speeds. (Right) Running mean of the best performance of all the algorithms for different speeds; higher total expected return is better. Shaded regions correspond to the standard error of the mean obtained using 30 trials. Notice the shape of the performance curve for the proposed methods, which closely captures the trend of maximum reward attainable over time.
}
    \label{apx:fig:recoresults}
\end{figure}

\begin{figure}
    \centering
    \includegraphics[width=0.3\textwidth]{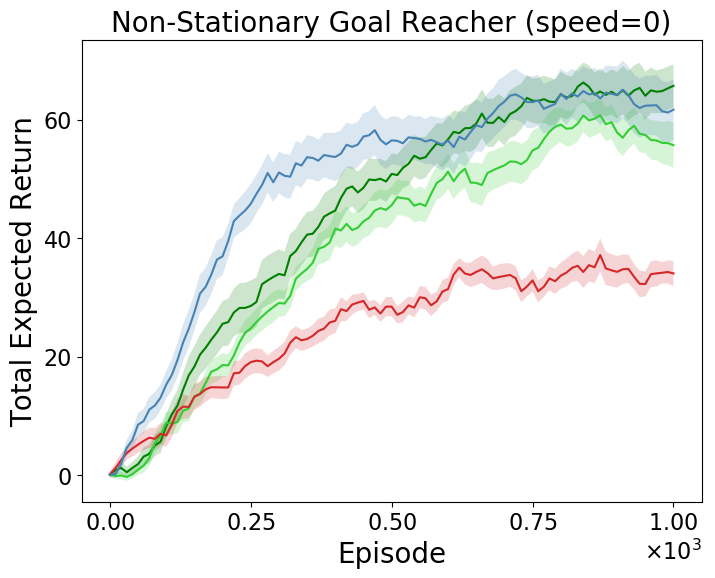} \quad \quad \quad\quad
    \includegraphics[width=0.3\textwidth]{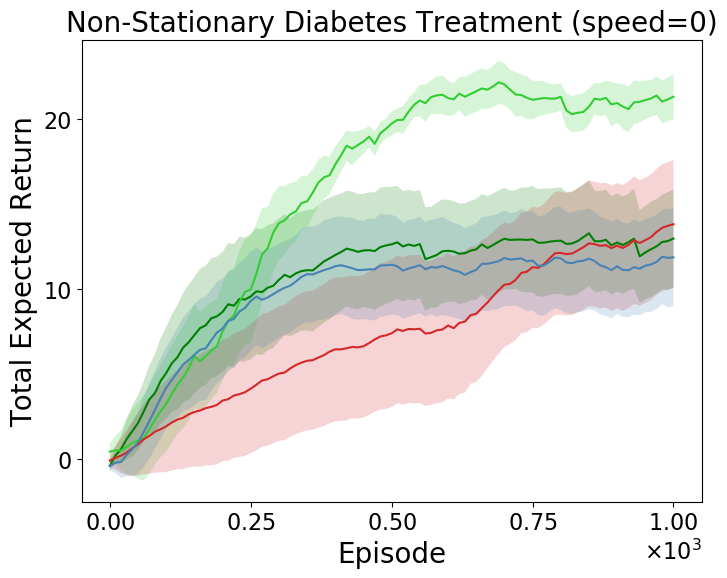}\\
    \includegraphics[width=0.3\textwidth]{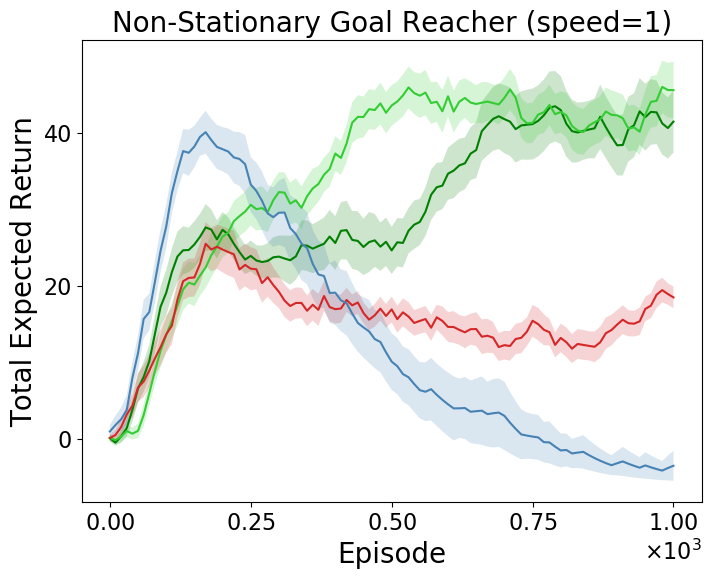} \quad \quad \quad\quad
    \includegraphics[width=0.3\textwidth]{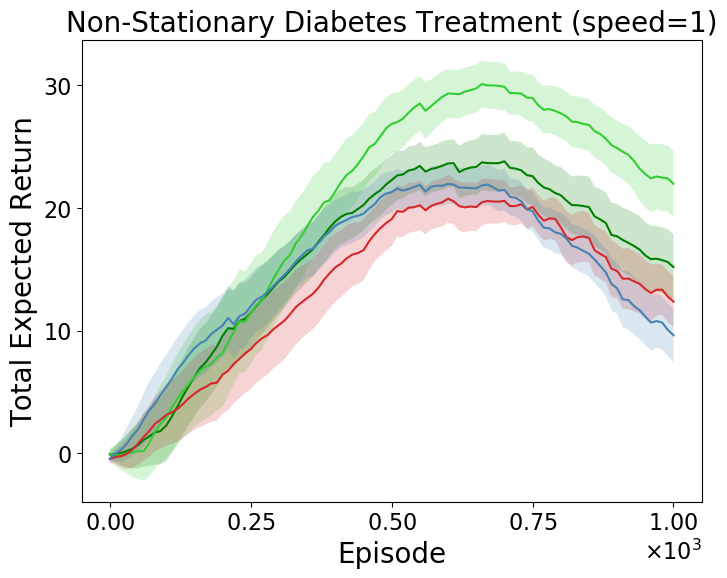}\\
    \includegraphics[width=0.3\textwidth]{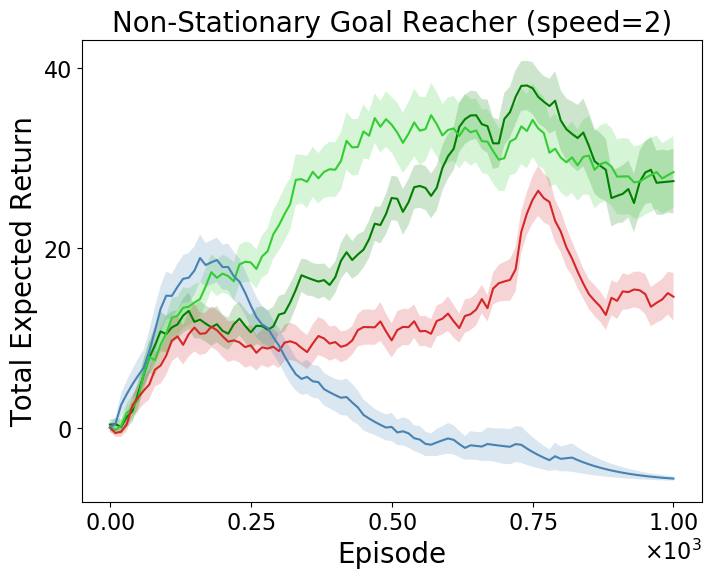} \quad \quad \quad\quad
    \includegraphics[width=0.3\textwidth]{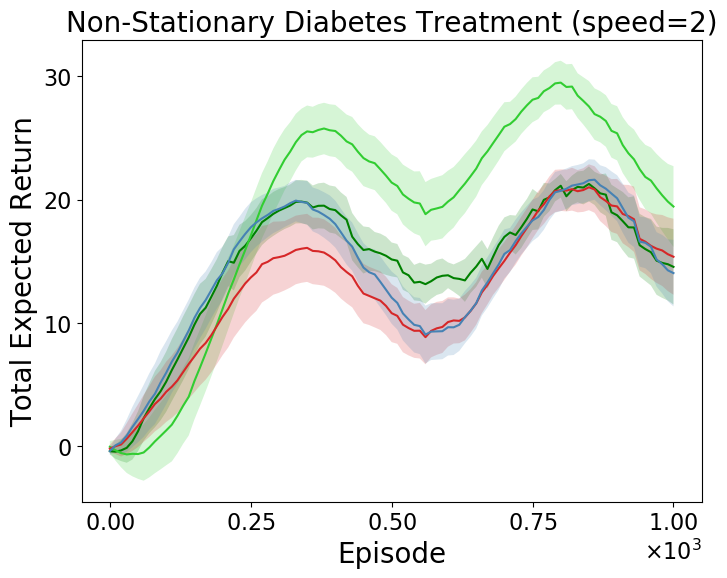}\\
    \includegraphics[width=0.3\textwidth]{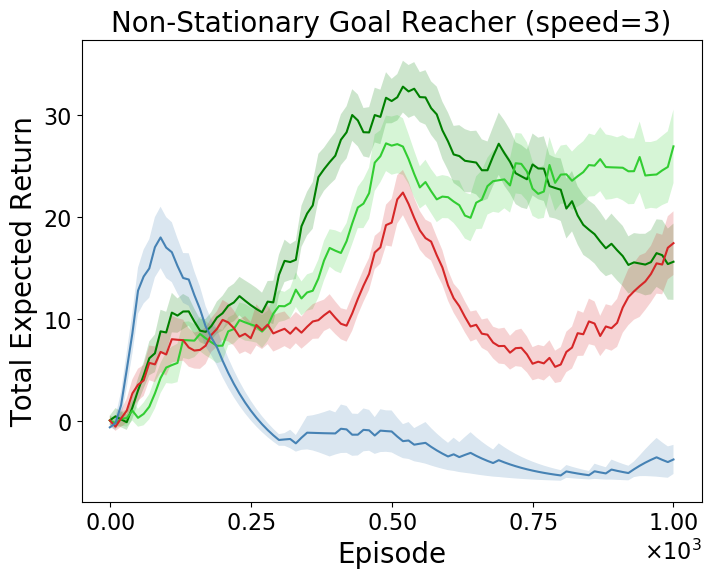} \quad \quad \quad\quad
    \includegraphics[width=0.3\textwidth]{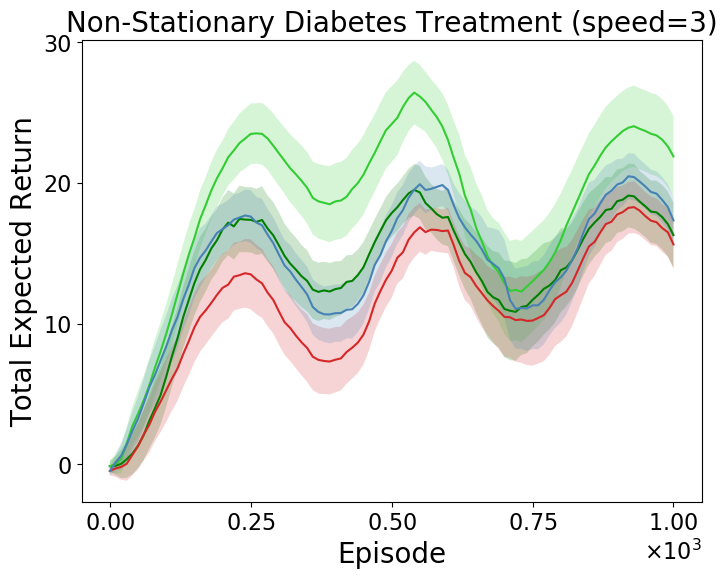}\\
    \includegraphics[width=0.3\textwidth]{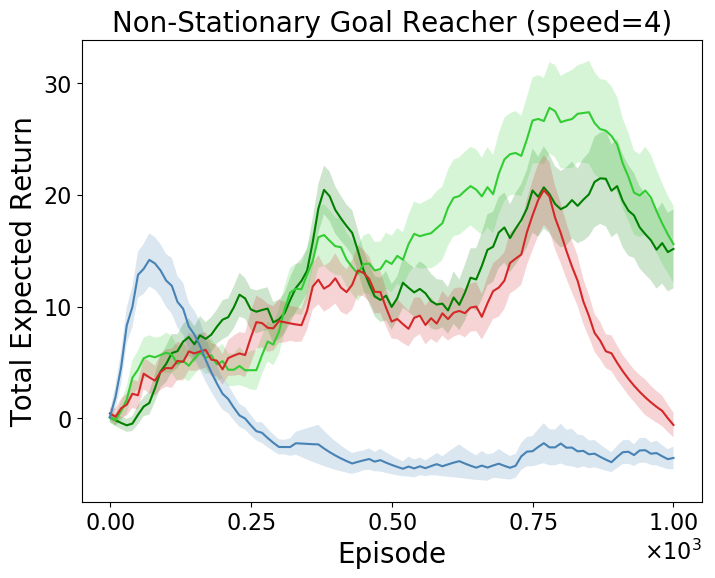} \quad \quad \quad\quad
    \includegraphics[width=0.3\textwidth]{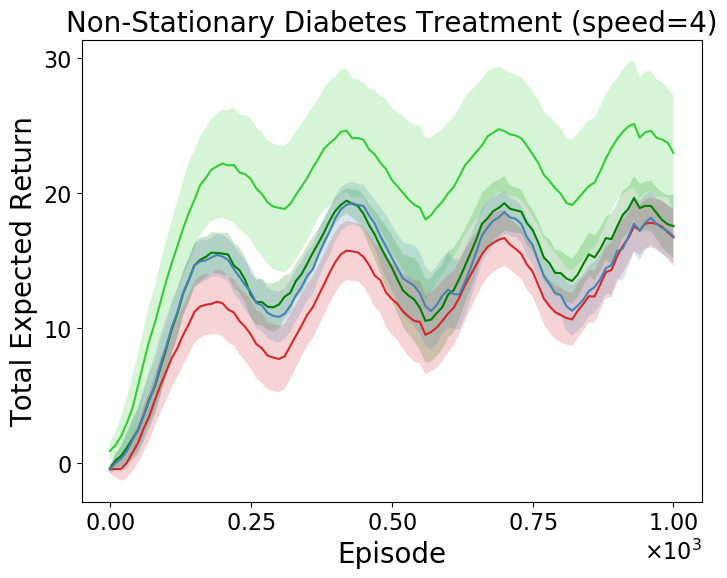}\\
    \includegraphics[width=0.5\textwidth]{Images/perf_legend.png}
    \caption{Running mean of the best performance of all the algorithms for different speeds; higher total expected return is better. Shaded regions correspond to the standard error of the mean obtained using 30 trials for NS Goal Reacher and 10 trials for NS Diabetes Treatment.}
    \label{apx:fig:detailedresults}
\end{figure}

\end{document}